\def\BibTeX{{\rm B\kern-.05em{\sc i\kern-.025em b}\kern-.08emT\kern-.1667em\lower.7ex\hbox{E}\kern-.125emX}}
\def\thm@space@setup{%
  \thm@preskip=0em
  \thm@postskip=\thm@preskip
}
\newif\ifdraft
\newif\ifshowappfootnote
\newcommand{\appfootnote}{%
  \unskip
  \ifshowappfootnote
    \footnote{Appendices can be found in the supplementary material.}
    \global\showappfootnotefalse
  \fi
  \unskip
}
\renewcommand{\eqref}[1]{Equation~\ref{eq:#1}\xspace}
\newcommand{\secref}[1]{Section~\ref{sec:#1}\xspace}
\newcommand{\appref}[1]{Appendix~\ref{app:#1}\appfootnote\xspace}
\newcommand{\figref}[1]{Figure~\ref{fig:#1}\xspace}
\newcommand{\tabref}[1]{Table~\ref{tab:#1}\xspace}
\newcommand{\thmref}[1]{Theorem~\ref{thm:#1}\xspace}
\newcommand{\lemref}[1]{Lemma~\ref{lem:#1}\xspace}
\newcommand{\corref}[1]{Corollary~\ref{cor:#1}\xspace}
\newcommand{\eqrefs}[3][and]{Equations~\ref{eq:#2} #1~\ref{eq:#3}\xspace}
\newcommand{\secrefs}[3][and]{Sections~\ref{sec:#2} #1~\ref{sec:#3}\xspace}
\newtheoremstyle{mytheoremstyle}
  {\parsep} % Space above
  {} % Space below
  {\itshape} % Body font
  {} % Indent amount
  {\bfseries} % Theorem head font
  {.} % Punctuation after theorem head
  {0.5em} % Space after theorem head
  {} % Theorem head spec (can be left empty, meaning `normal')
\theoremstyle{mytheoremstyle}
\newtheorem*{rep@theorem}{\rep@title}
\newcommand{\newreptheorem}[2]{\newenvironment{rep#1}[1]{\def\rep@title{#2 \ref{##1}}\begin{rep@theorem}}{\end{rep@theorem}}}
\newtheorem{theorem}{Theorem}
\newtheorem{lemma}{Lemma}
\newtheorem{corollary}{Corollary}
\newif\ifreptheorem
\newif\ifshowproofs
\newcommand{\switchreptheorem}[2]{
  \ifreptheorem
    #1\xspace
  \else
    #2\xspace
  \fi
}
\newcommand{\replabel}[1]{
  \ifreptheorem
    \tag{\ref{#1}}
  \else
    \label{#1}
  \fi
}
\BODY\end{reptheorem}
\BODY\end{theorem}
\BODY\end{replemma}
\BODY\end{lemma}
\BODY\end{repcorollary}
\BODY\end{corollary}
\BODY\end{proof}\label{app:prf:#1}
\newenvironment{titled-paragraph}[1]{\textbf{#1:}}{}
\newcommand{\tensorflow}{TensorFlow\xspace}
\newcommand{\iid}{\emph{i.i.d.}\xspace}
\newcommand{\wrt}{w.r.t.\xspace}
\newcommand{\ie}{i.e.\xspace}
\newcommand{\eg}{e.g.\xspace}
\newcommand{\egcite}[1]{\citep[\eg][]{#1}}
\newcommand{\probability}{\ensuremath{\mathrm{Pr}}}
\newcommand{\expectation}{\ensuremath{\mathbb{E}}}
\DeclareMathOperator*{\argmax}{argmax}
\renewcommand{\hat}{\widehat}
\renewcommand{\>}{{\rightarrow}}
\newcommand{\I}{{\mathbf I}}
\newcommand{\1}{{\mathbf 1}}
\newcommand{\0}{{\mathbf 0}}
\newcommand{\cH}{{\mathcal H}}
\newcommand{\cX}{{\mathcal X}}
\newcommand{\p}{{\mathbf p}}
\renewcommand{\v}{{\mathbf v}}
\newcommand{\x}{{\mathbf x}}
\begin{document}

\label{document:begin}

% [
\title{Distilling Double Descent}

% It is OKAY to include author information, even for blind
% submissions: the style file will automatically remove it for you
% unless you've provided the [accepted] option to the icml2021
% package.

% List of affiliations: The first argument should be a (short)
% identifier you will use later to specify author affiliations
% Academic affiliations should list Department, University, City, Region, Country
% Industry affiliations should list Company, City, Region, Country

% You can specify symbols, otherwise they are numbered in order.
% Ideally, you should not use this facility. Affiliations will be numbered
% in order of appearance and this is the preferred way.
% \icmlsetsymbol{equal}{*}

% \begin{authorlist}

\author{Andrew Cotter \quad\quad Aditya Krishna Menon  \quad\quad Harikrishna Narasimhan\\Ankit Singh Rawat \quad\quad Sashank J. Reddi \quad\quad Yichen Zhou\footnote{Correspondence to: \url{yichenzhou@google.com}}\\[10pt]Google Research, USA
}
% \date{Google Research, USA}
% \end{authorlist}

% \icmlaffiliation{google}{Google Research, Mountain View, CA, USA}

% \icmlcorrespondingauthor{Yichen Zhou}{yichenzhou@google.com}

% You may provide any keywords that you
% find helpful for describing your paper; these are used to populate
% the "keywords" metadata in the PDF but will not be shown in the document
% \icmlkeywords{Machine Learning, ICML, Double Descent, Distillation}
\maketitle
% \vskip 0.3in
% ]
% this must go after the closing bracket ] following \twocolumn[ ...

% This command actually creates the footnote in the first column
% listing the affiliations and the copyright notice.
% The command takes one argument, which is text to display at the start of the footnote.
% The \icmlEqualContribution command is standard text for equal contribution.
% Remove it (just {}) if you do not need this facility.

% \printAffiliationsAndNotice{}  % leave blank if no need to mention equal contribution
%\printAffiliationsAndNotice{\icmlEqualContribution} % otherwise use the standard text.

\begin{abstract}
Distillation is the technique of training a ``student'' model based on examples that are labeled by a separate ``teacher'' model, which itself is trained on a labeled dataset. The most common explanations for why distillation ``works'' are predicated on the assumption that student is provided with \emph{soft} labels, \eg probabilities or confidences, from the teacher model. In this work, we show, that, even when the teacher model is highly overparameterized, and provides \emph{hard} labels, using a very large held-out unlabeled dataset to train the student model can result in a model that outperforms more ``traditional'' approaches.

Our explanation for this phenomenon is based on recent work on ``double descent''. It has been observed that, once a model's complexity roughly exceeds the amount required to memorize the training data, increasing the complexity \emph{further} can, counterintuitively, result in \emph{better} generalization. Researchers have identified several settings in which it takes place, while others have made various attempts to explain it (thus far, with only partial success). In contrast, we avoid these questions, and instead seek to \emph{exploit} this phenomenon by demonstrating that a highly-overparameterized teacher can avoid overfitting via double descent, while a student trained on a larger independent dataset labeled by this teacher will avoid overfitting due to the size of its training set.
\end{abstract}

\section{Introduction}\label{sec:introduction}

The classical view of the trade-off between model complexity and performance tells us that, while more complex models are better able to represent the training data, they will also tend to find and depend upon spurious patterns that are absent in the underlying problem, but are present in the particular training sample by happenstance. Consequently, a highly-complex model will perform poorly during evaluation on held-out examples, despite performing well on the training set, since it will have learned to rely on patterns that \emph{aren't really there}. In other words, complex models will \emph{overfit}, and more complex models will overfit \emph{more}. Mysteriously, however, it has long been observed that overparameterized neural networks tend to perform better than they ``should'': some are trained to achieve near-zero training loss (implying that their capacity exceeds the amount of information in the training data), yet they still perform extremely well on held-out examples~\citep{Advani:2020,Neyshabur:2019,Geiger:2019}.

In recent years, this phenomenon has been mapped-out more completely in the ``double descent'' literature~\citep{Belkin:2019,Poggio:2019,Nakkiran:2020}. If one creates a plot with test error on the vertical axis, and some measure of model complexity on the horizontal axis (\eg the number of hidden neurons), then one observes that the classical intuition is correct, but only \emph{at first}: starting from the simplest model, the error initially improves as the complexity increases (because the ``bias'' is decreasing), until it starts to overfit, and the performance of increasingly-complex models degrades (because the ``variance'' is increasing). We call this the ``classical regime'', and---at least in some settings---it eventually breaks down: when the model is roughly complex enough to memorize the training data, the test error begins to \emph{shrink} as the model becomes still more complex. In fact, the most complex model can often be the best-performing, achieving an even lower test error than the best model in the classical regime (which we will call the ``bias-variance trade-off model''). \figref{binary-mnist}, which will be discussed in more detail later, includes two curves illustrating this phenomenon.

A number of attempts have been made to explain why double descent is observed~\egcite{Neyshabur:2019,Geiger:2019,Poggio:2019,Poggio:2020,Chang:2020}, but while such papers make compelling arguments, none are definitive. This, however, is a question that we sidestep entirely: we make no attempt to justify or explain the anomalously good performance of highly-complex models, nor the double descent phenomenon itself. Instead, we accept these as given, and then seek to \emph{exploit} them. Our ultimate goal is to learn a relatively simple model that does \emph{not} suffer from overfitting, by making use of distillation as follows:

\begin{enumerate*}
  \item Train a highly complex ``teacher'' model (\eg a neural network that is much more overparameterized than is necessary to memorize the training data) and \emph{assume} that its performance on held-out examples will significantly exceed that of the bias-variance trade-off model. This assumption (which, again, we make no attempt to justify, aside from noting that it has been empirically observed in a number of different settings~\egcite{Belkin:2019,Nakkiran:2020}) is crucial to the success of our proposal.
  \item Use this teacher model to label a large unlabeled dataset. Because this dataset is labeled with the output of the teacher model, instead of ground-truth labels, one would expect it to be lower-quality than the original training dataset. However, it will be \emph{much larger}.
  \item Use this new dataset to train a simpler ``student'' model. The fact that this model is trained on lower-quality labels will negatively impact its performance, but the \emph{size} of the dataset will reduce overfitting w.r.t. the teacher, or even eliminate it entirely (if we have access to unlimited unlabeled data).
\end{enumerate*}

We do not expect this student to outperform the teacher, but the teacher, being by assumption highly overparameterized (and therefore large, unwieldy, and expensive to evaluate), is likely to be impractical to deploy in a production environment. Indeed, reducing the \emph{cost} of the final model by training a simple student from a complex teacher is, perhaps, the canonical application of distillation~\egcite{Bucilua:2006,Hinton:2015}.

%If the benefit of reducing overfitting outweighs the cost of using inferior labels, then the result of the above procedure will be a simple student model that outperforms an equivalently-simple model trained on the original training set.
%
%For this reason, our goal is to demonstrate that, if one desires a simple model, then it might be advisable to use distillation, with a highly overparameterized teacher and a large unlabeled dataset, instead of directly optimizing on the original training dataset. As a side-effect, this allows us to partially explain the sometimes-surprising efficacy of distillation in terms of another mostly-unexplained observation---double descent---thereby unifying our understanding of these two phenomena.
%
Our main contributions are: (i) the observation that---thanks to double descent---the benefit of reducing overfitting can outweigh the cost of using inferior labels in the above procedure, resulting in a simple student that outperforms an equivalently-simple model trained on the original training set; (ii) noting that, unlike most existing distillation work (\secref{related}), the success of this procedure does \emph{not} depend upon the teacher providing soft labels to the student, and indeed it works well even with hard labels; (iii) an initial theoretical justification for the use of hard labels in distillation (\secref{theory}); and (iv) an experimental evaluation of our proposal on computer vision tasks (\secref{experiments}).

\subsection{Illustration}\label{sec:introduction:illustration}

\begin{figure}[t]

\centering

\includegraphics[width=0.6\columnwidth]{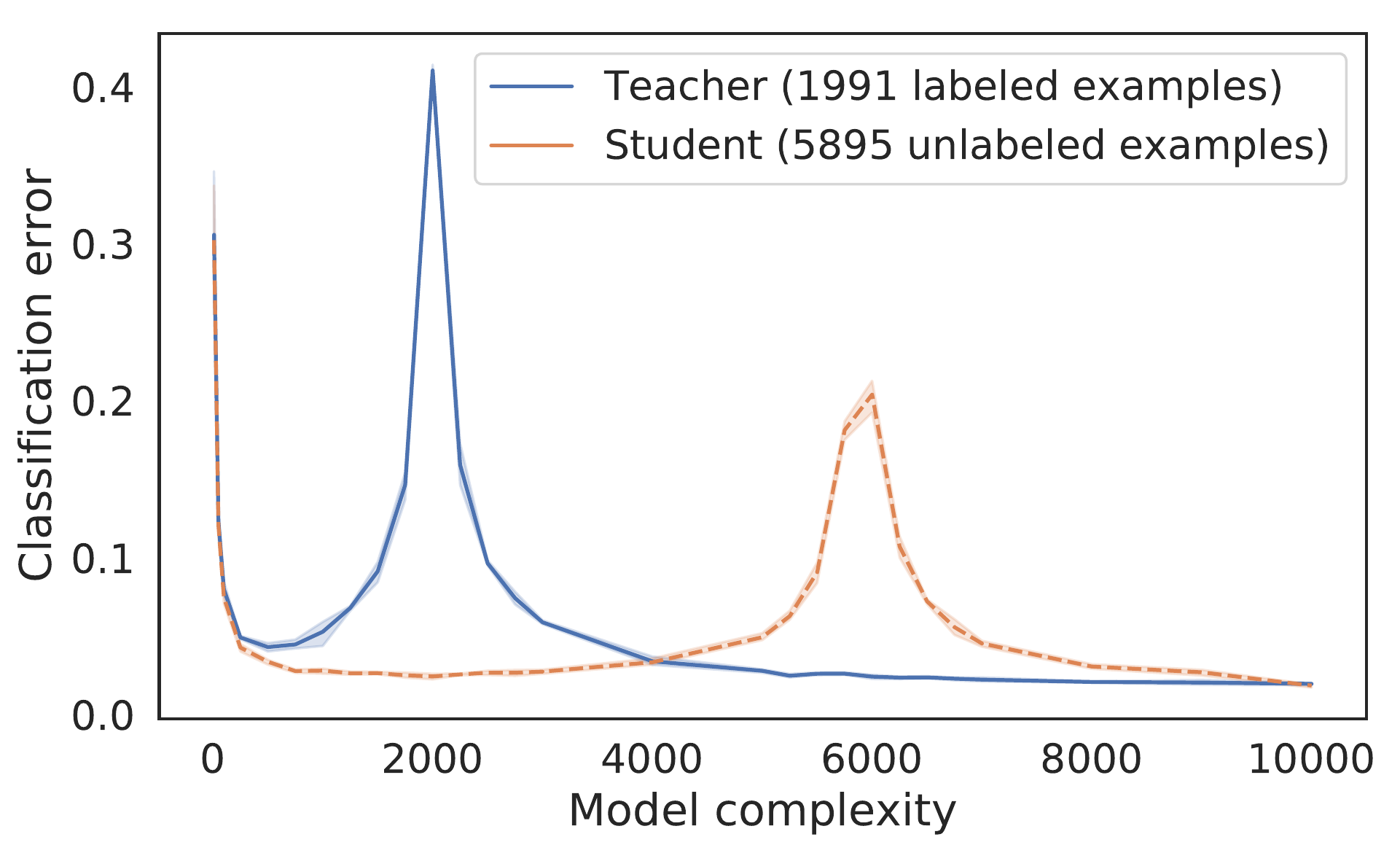}

\caption{
  Double-descent curves, plotting test error against model complexity, for the toy binary MNIST experiment of \secref{introduction:illustration}. The student is trained on an independent dataset, with soft labels being provided by the most complex teacher ($m=10000$). This teacher, despite being highly overparameterized, still generalizes well, and is therefore capable of providing high-quality labels to the (larger) student dataset. Since the student is trained on more data, it overfits less in the ``classical'' regime, and therefore outperforms equivalently-complex models trained on the original training set (\ie the ``teacher'' curve) for low complexities.
  %
%   \vspace{-10pt}
}

\label{fig:binary-mnist}

\end{figure}
In \secref{experiments}, we'll present experiments on realistic image classification tasks and model architectures. Here, to clarify the above discussion, we will illustrate our proposal on a simpler ``toy'' experiment based on \citet{Belkin:2019}.

We binarize the MNIST digit classification problem by seeking to to distinguish ``4''s from ``9''s, and minimize $L_2$-regularized squared loss over random ReLU feature models of the form $f_m(x) = \sum_{k=1}^m\alpha_k\max\{\langle \x, \v_k\rangle, 0\}$, where $\v_1,\ldots, \v_m$ are random features drawn from a Gaussian distribution $\mathcal{N}(\0, \I / \sigma^2)$. We set the regularization strength to $10^{-4}$ and $\sigma$ to 5.
Here, as in all of our experiments, both teacher and student models are identically-structured, with only the complexity $m$ varying.

The teacher is trained on $1991$ labeled examples, and the student on $5895$ examples labeled by a  teacher model with complexity $m=10000$. 
We create double-descent curves for the teacher and the student: in \figref{binary-mnist}, the horizontal axis is identical for both curves (the number of random ReLU features $m$),
and we can see that, since the student models are trained on a larger dataset, the overfitting ``hump'' in the double-descent curve occurs later, with the consequence that simpler student models---\ie up to about $m=4000$ or so---outperform identically-structured (``teacher'') models that were trained on the original labeled dataset.
\section{Distillation}\label{sec:related}

The overall goal of our work---using a complex teacher to train a simpler student, because the teacher itself is too expensive to be deployed in a production system---is firmly entrenched in the mainstream distillation literature. However, the ``complex teacher'' is typically taken to be an ensemble~\egcite{Bucilua:2006,Hinton:2015,Radosavovic:2018}, instead of an overparameterized neural network.

This difference has a somewhat non-obvious consequence: because our teacher memorizes the training data (and \emph{then some}), it could make nearly-hard predictions even on held-out examples, since, during training, increasing its overall confidence is only to its benefit. Indeed, while our experiments (\secref{experiments}) show that using soft labels from an overparameterized teacher results in a better student than hard labels, we make no special effort to train a teacher that gives high-quality soft predictions (if anything, we do the opposite), and in fact, consistently with our (double-descent-based) explanation for the teacher's performance, thresholded hard labels also work \emph{very well}.

This conflicts with most of the existing distillation literature, which postulates that distillation ``works'' largely because the teacher provides good soft labels~\egcite{Lopez-Paz:2016,Furlanello:2018,Tang:2020,Menon:2020,Zhou:2021}, containing what \citet{Hinton:2015} call ``dark knowledge'', which can be coarsely understood as extra information that the teacher reveals to the student in its particular confidence scores on each example. We do not seek to challenge this conventional wisdom, and believe it to be largely correct in the settings that these papers consider, particularly given that they generally train the student on the same dataset as was used to train the teacher (albeit with teacher-derived labels, instead of the originals). Indeed, in the self-distillation regime, in which the student and teacher models additionally have exactly the same structure~\egcite{Furlanello:2018,Mobahi:2020,Zhang:2020}, it's hard to imagine any other mechanism by which the teacher could be imparting useful extra information to the student.

Our work, however, relies on a different mechanism entirely: double descent, in which an overparameterized teacher, which may be thresholded to make hard predictions, is capable of labeling fresh examples almost as well as the original labeling process. In other words, rather than exploiting the superior representational power of soft labels, we exploit the superior raw performance of overparameterized models. By using this teacher to label a large unlabeled dataset, we can train a student that overfits little \wrt the teacher. The use of an unlabeled dataset is \emph{slightly} unusual, but is not novel, having been used effectively in \eg \citet{Bucilua:2006,Hinton:2015,Radosavovic:2018}.

We should note, however, that there is not necessarily any inherent conflict between the traditional ``dark knowledge'' understanding of distillation, and our new ``double descent'' approach: one might imagine that combining highly-overparameterized neural networks with \eg bagging~\citep{Breiman:1996}, or \citet{Radosavovic:2018}'s ensemble-creation procedure, could yield an ensemble---making soft predictions---that enjoys the benefits of both.

\nocite{Advani:2020}
\nocite{Neyshabur:2019}
\nocite{Geiger:2019}
\nocite{Belkin:2019}
\nocite{Poggio:2019}
\nocite{Nakkiran:2020}
\nocite{Poggio:2020}
\nocite{Muthukumar:2020}
\nocite{Chang:2020}

\nocite{Bucilua:2006}
\nocite{Hinton:2015}
\nocite{Radosavovic:2018}
\nocite{Furlanello:2018}
\nocite{Dong:2019}
\nocite{Mobahi:2020}
\nocite{Menon:2020}
\section{Hard Teacher Labels Suffice}\label{sec:theory}

\begin{figure*}[t]

\centering

\begin{tabular}{cc}
  \includegraphics[width=0.45\textwidth]{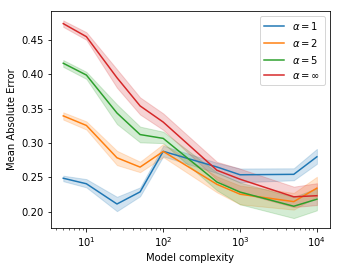} &
  \includegraphics[width=0.45\textwidth]{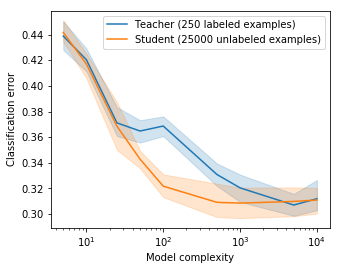}
\end{tabular}
\caption{
  Random ReLU teacher models $p^t$ trained on simulated Gaussian data, for varying numbers of ReLU features. The figure on left plots the mean absolute errors $\expectation_x\left[|p^t(x)-\phi(p^*(x))|\right]$ for different temperature scalings $\phi(z) = \frac{z^\alpha}{ z^\alpha + (1-z)^{\alpha}}$, with  $\alpha=1, 2, 5, \infty$. For $\alpha =1$, $\phi(z) = z$ and for $\alpha=\infty$, $\phi(z)=\1(z>0.5)$. The figure on right plots the classification error for the teacher models, and for a student distilled with the 10000-feature teacher.
% \TODO{Change plot to excess risk?}
  %
}

\label{fig:binary-simulated}

\end{figure*}
In this section, we provide one possible explanation for why distilling with a teacher making hard predictions can be as effective as using a more traditional teacher. % that provides ``high quality'' soft labels (\eg probabilities). 
A teacher making hard predictions naturally does not provide the student with the confidence scores or probabilities expected by \eg \citet{Menon:2020}, but we will show that such a teacher can still perform well when used to label a large pool of unlabeled examples, with the student being trained on this larger dataset. The condition that we need the teacher to satisfy is that its predictions approximate a ``margin-preserving'' transformation of the underlying class probabilities (with hard-thresholding being one such transformation).
% Our take is similar to the ``statistical perspective''  that \citet{Menon:2020} offer for distillation, with one key difference being that, unlike us, they assume the teacher and student models are trained on the same dataset. %, whereas we train the student models on a large unlabeled sample. 
% % \citeauthor{Menon:2020} 
% They
% show that  when the teacher approximates the underlying conditional-class probabilities well, the soft labels that it provides helps improve the student model's generalization. 
%However, this explanation for the classical distillation setup does not directly apply to our double descent approach, because we use over-parameterized teacher models that make  nearly-hard predictions and are therefore unlikely to approximate the class probabilities well. Instead, we show that when the student model is trained on large pool of unlabeled examples (labeled by the teacher), an over-parameterized teacher model can still be very useful as long as it is able to approximate a ``margin-preserving'' transformation of the class probabilities (hard-thresholding being one example of such a transformation).

To this end, we formally state our problem setup.
Let $D$  denote the underlying data distribution over instances  and labels $\cX \times [m]$, and let $D_{\cX}$ denote the marginal distribution over instances. We'll be particularly interested in the conditional-class probabilities for this distribution, which we denote by $p^*_y(x) = \probability(Y=y|X=x)$, and seek to use teacher models that approximate some transformation of these class probabilities.

We consider student classification models $h: \cX \> [m]$ 
that map instances $x$ to one of $m$ labels, and measure their performance in terms of their  expected 0-1 risk:
\begin{eqnarray}
R(h) &=&
\expectation_{(x,y)\sim D}\left[ \1(h(x) \ne y)\right]
\nonumber
\\
&=& \expectation_{x\sim D_\cX}\left[ \sum_{i=1}^m p^*_i(x) \1(h(x) \ne i)\right].
\label{eq:pop-risk}
\end{eqnarray}
% It is straightforward to see that the classifier that achieves the minimum 0-1 risk is given by
% The Bayes-optimal classifier for the  0-1 risk is  given by: $$h^*(x) = \argmax_y p^*_y(x).$$
Specifically, we will evaluate the student model against the classifier that achieves the minimum 0-1 risk, i.e.\ against the \emph{Bayes-optimal} classifier, which takes the form:
$$h^*(x) = \argmax_{y\in[m]} p^*_y(x),$$
where, we assume, for convenience,  that ties are broken in favor of the larger class. 

Our  approach is to learn the student classifier from a simple hypothesis class $\cH$ using a  teacher model $\p^t:\cX\>\Delta^m$, where $\Delta^m$  denotes the $(m-1)$-dimensional simplex with $m$ coordinates. We use the teacher to label a large sample of unlabeled examples $S^u =\{x_1,\ldots, x_{n^u}\}$ of size $n^u$ drawn from $D_\cX$, and seek to minimize the student's error on the teacher-labeled dataset. This procedure can be stated as an empirical risk minimization (ERM) problem for the student:
\begin{equation}
    \hat{h} \in \min_{h\in \cH}\, \frac{1}{n^u}\sum_{x \in S^u}\sum_{i=1}^m p^t_i(x)\1(h(x) \ne i),
%\tag{Stud-ERM}
\label{eq:erm}
\end{equation}
where in practice, the indicator functions are approximated with differentiable surrogate losses.
We will find it useful to also define the expected \emph{teacher-distilled} risk for a student classifier:
\begin{eqnarray}
R^t(h) &=& \expectation_{x\sim D_\cX}\left[ \sum_{i=1}^m p^t_i(x) \1(h(x) \ne i)\right].
\label{eq:distilled-risk}
\end{eqnarray}

%We will consider two types of teachers: the first approximates the Bayes probabilities $\p^*$, and the second approximates the Bayes classifications:  $$\p^\dagger(x) = \e_{h^*(x)},$$ where $\e_y \in \{0,1\}^m$ is a one-hot encoding of label $y$.

We are now ready to state our main result.
Our core premise is that while the over-parameterized teacher $\p^t$ may not provide \emph{soft} labels that
approximate the conditional-class probabilities $\p^*$ well, it still may be able to provide \emph{nearly-hard} labels that approximate a certain transformation $\phi: \Delta^m \> \Delta^m$ of the  class probabilities $\p^*$, i.e.\ $\p^t(x) \approx \phi(\p^*(x))$. Below, we bound the excess risk for the student empirical risk minimizer $\hat{h}$ of \eqref{erm}, when the transformation $\phi$ preserves the margin of separation between the most-likely label and all other labels.
%We now derive a bound on the excess risk for a student trained with a given teacher $\p^t$, i.e. bound the gap between the student's 0-1 risk and that of the Bayes-optimal classifier $h^*$:
% $
% R(\hat{h}) \,-\, R(h^*).
% $ 
% To this end, we define the \emph{teacher-distilled} risk for a student $h$ with teacher $\p^t$:
% \begin{eqnarray}
% R^t(h) &=& \expectation_{x\sim D_\cX}\left[ \sum_{i=1}^m p^t_i(x) \1(h(x) \ne i)\right].
% \label{eq:distilled-risk}
% \end{eqnarray}
\begin{thm}{excess-error-bound}
Let $\phi: \Delta^m \> \Delta^m$ be a transformation function such that the  transformed probabilities $\p^\phi(x) = \phi(\p^*(x))$ satisfy: 
\begin{equation}
  \replabel{eq:argmax}
  \argmax_{y} p^\phi_y(x) = \argmax_{y} p^*_y(x),
\end{equation}
where ties are broken in favor of the larger class, and
\begin{equation}
  \replabel{eq:margin}
  \max_i p^\phi_i(x) - p^\phi_j(x) \,\geq\, \max_i p^*_i(x) - p^*_j(x), \forall j \in [m].
\end{equation}
Fix $\delta \in (0,1)$. Then with  probability at least $1 - \delta$ over draw of $n^u$ unlabeled examples from $D_\cX$, the solution $\hat{h} \in \cH$ to the student empirical risk minimization problem in \eqref{erm} with a fixed teacher $\p^t$ satisfies:
\switchreptheorem{
  \begin{equation*}
    R(\hat{h}) - R(h^*) \le 
    \underbrace{\mathcal{O}\left(\sqrt{\frac{\log(|\cH|/\delta)}{n^u}}\right)}_{\text{Student estimation error}}
    + \underbrace{\min_{h\in \cH}R^t(h) - \min_{h: \cX \>[m]}R^t(h)}_{\text{Student approximation error}}
    + \underbrace{\expectation_{x}\left[\|\p^t(x) - \p^\phi(x)\|_1\right]}_{\text{Teacher approximation error}},
  \end{equation*}
}{
  \begin{equation*}
    R(\hat{h}) - R(h^*) \le 
    \underbrace{\mathcal{O}\left(\sqrt{\frac{\log(|\cH|/\delta)}{n^u}}\right)}_{\text{Student estimation error}}
    + \underbrace{\min_{h\in \cH}R^t(h) - \min_{h: \cX \>[m]}R^t(h)}_{\text{Student approximation error}}
    + \underbrace{\expectation_{x}\left[\|\p^t(x) - \p^\phi(x)\|_1\right]}_{\text{Teacher approximation error}},
  \end{equation*}
}
where $|\cH|$ can be replaced by a measure of capacity of the student hypothesis class $\cH$.
\end{thm}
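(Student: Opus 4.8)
The plan is to decompose $R(\hat{h}) - R(h^*)$ into the three advertised pieces, with the conceptual crux being a pointwise ``regret'' comparison that converts the two structural hypotheses \eqref{argmax} and \eqref{margin} into a bound of the true $0$-$1$ excess risk by the excess risk under the transformed probabilities $\p^\phi$. Concretely, I would introduce the auxiliary risk $R^\phi(h) = \expectation_{x}[\sum_i p^\phi_i(x)\1(h(x)\neq i)]$ and use the identity $\sum_i p_i(x)\1(h(x)\neq i) = 1 - p_{h(x)}(x)$ to rewrite each population risk pointwise. Since $h^*(x) = \argmax_i p^*_i(x)$ and, by \eqref{argmax}, $\argmax_i p^\phi_i(x) = \argmax_i p^*_i(x)$, the Bayes classifier $h^*$ is simultaneously the pointwise minimizer of $R$ and of $R^\phi$. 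Hence $R(h) - R(h^*) = \expectation_x[\max_i p^*_i(x) - p^*_{h(x)}(x)]$ and $R^\phi(h) - R^\phi(h^*) = \expectation_x[\max_i p^\phi_i(x) - p^\phi_{h(x)}(x)]$, and applying \eqref{margin} with $j = h(x)$ coordinatewise and integrating gives the key inequality $R(h) - R(h^*) \le R^\phi(h) - R^\phi(h^*)$ for every $h$, in particular for $\hat{h}$.

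Next I would pass from $R^\phi$ to the teacher-distilled risk $R^t$. Bounding $|R^\phi(h) - R^t(h)| \le \expectation_x[\|\p^t(x) - \p^\phi(x)\|_1]$ separately for $\hat{h}$ and $h^*$ would cost two copies of the bias term, so to recover the single copy in the statement I would bound the \emph{difference} $(R^\phi(\hat{h}) - R^t(\hat{h})) - (R^\phi(h^*) - R^t(h^*))$ at once: writing $g_i = p^t_i - p^\phi_i$, its pointwise integrand equals $g_{\hat{h}(x)}(x) - g_{h^*(x)}(x)$, which vanishes when $\hat{h}(x) = h^*(x)$ and otherwise involves only two coordinates, hence is at most $\|\p^t(x) - \p^\phi(x)\|_1$. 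This yields $R^\phi(\hat{h}) - R^\phi(h^*) \le (R^t(\hat{h}) - R^t(h^*)) + \expectation_x[\|\p^t(x) - \p^\phi(x)\|_1]$.

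For the remaining term, each per-example loss $1 - p^t_{h(x)}(x)$ lies in $[0,1]$ and $\p^t$ is fixed, so Hoeffding's inequality together with a union bound over $\cH$ gives, with probability at least $1-\delta$, $\sup_{h\in\cH}|R^t(h) - \hat{R}^t(h)| = \mathcal{O}(\sqrt{\log(|\cH|/\delta)/n^u})$, where $\hat{R}^t$ is the empirical objective of \eqref{erm}. The usual empirical-risk-minimization argument (optimality of $\hat{h}$ against the in-class minimizer of $R^t$, plus the trivial bound $R^t(h^*) \ge \min_{h:\cX\>[m]} R^t(h)$) then gives $R^t(\hat{h}) - R^t(h^*) \le \min_{h\in\cH} R^t(h) - \min_{h:\cX\>[m]} R^t(h) + \mathcal{O}(\sqrt{\log(|\cH|/\delta)/n^u})$. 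Chaining this with the previous two steps produces the stated bound, and replacing $|\cH|$ by a capacity measure alters only this uniform-convergence step.

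I expect the main obstacle to be the first step: one has to recognize the pointwise regret identities and verify that \eqref{argmax} makes $h^*$ optimal for both $R$ and $R^\phi$, so that \eqref{margin} can legitimately be applied coordinatewise — without this alignment the transformed risk would not dominate the true excess risk, and the margin hypothesis would do no work. Steps two and three are routine, the only subtlety being the coupling observation in step two that keeps the teacher approximation error at a single factor of $\expectation_x[\|\p^t(x) - \p^\phi(x)\|_1]$ rather than twice that.
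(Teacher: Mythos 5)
Your proposal is correct and follows essentially the same route as the paper's proof: the margin condition \eqref{margin} combined with \eqref{argmax} converts the true excess risk into an excess risk under $\p^\phi$, the $\ell_1$ teacher-approximation term is extracted with a single (not doubled) factor exactly as in the paper (which uses H\"older's inequality where you note that only two coordinates of $\p^t - \p^\phi$ are involved -- the same bound), and the final step is the standard Hoeffding-plus-union-bound ERM argument. The only differences are presentational (your explicit auxiliary risk $R^\phi$ versus the paper's inline chain of inequalities), so there is nothing substantive to flag.
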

%

%We will assume that $\p^t$ approximates some transformation $\phi: \Delta_m \> \Delta_m$ of the Bayes probabilities $\p^*$, i.e.\ $\p^t(x) \approx \phi(\p^*(x))$. 

Each of the  three terms captures an important aspect of the learning problem: (i) The student estimation error represents the variance in the student risk, and for a hypothesis class $\cH$ with finite capacity, decreases as we increase the size of the unlabeled sample $n^u$.  (ii) The student approximation error measures how close we can get to the optimal distilled risk $R^t$ using  models in $\cH$. %The magnitude of this term may also depend on the smoothness of the labels provided by the teacher $\p^t$. For example, consider a binary-labeled problem where $\p(x) = [0.6, 0.4]$ for an instance $x$. If the teacher $\p^t$ labels instance $x$ with the same Bayes probabilities, a student trained with this teacher incurs a small excess error of $0.6 -0.4 = 0.2$  when predicting 0 on $x$. If the teacher instead labels $x$ with the one-hot Bayes classification $[1,0]$, a student incurs a  higher excess risk of 1 when predicting 0 on $x$.
(iii) The teacher approximation error  captures how closely the teacher $\p^t$ approximates a $\phi$-transformed version of the class probabilities, where $\phi$ can be any transformation that satisfies the conditions in \eqrefs{argmax}{margin}. One can tighten this bound by taking a min over all such transformations $\phi$, or equivalently over all margin-preserving projections of $p^*$.

Note that $\phi(z) = z$ would trivially satisfy the margin conditions in \eqrefs{argmax}{margin},  and recovers the conditional-class probabilities. More generally, any temperature-scaling transformation $\phi_i(z) = z_i^\alpha / \sum_j z_j^\alpha$, for $\alpha > 1$, would satisfy these conditions
(see \appref{temperature-scaling} for a proof), and so would a  hard-thresholding transformation  $\phi_i(z) = \1(\argmax_j z_j = i)$. % of the probabilities.

% \subsection{Relationship to \citet{Menon:2020}}
We note that
\citet{Menon:2020} also offer a ``statistical perspective'' for distillation, with one key difference being that, unlike us, they assume the teacher and student models are trained on the same dataset.
They show that when the teacher approximates the underlying conditional-class probabilities $p^*$ well, the soft labels that it provides helps improve the student model's generalization. %Their core argument is that the teacher-labeled risk has a \emph{lower variance} that the empirical risk computed with the original training labels. In our approach, 
In contrast, our analysis is for a setting in which we use high quality hard labels on a large held-out unlabeled sample.

\subsection{Illustration of Teacher Approximation Error}

Our expectation is that an over-parameterized teacher could be better at approximating a hard transformation of the class probabilities $p^*$ than the exact probabilities themselves. To demonstrate this, we use a simulated 50-dimensional dataset with binary labels, where the positive and negative examples are drawn from Gaussian distributions $\mathcal{N}(0.1 \cdot \1, \I)$ and $\mathcal{N}(-0.1 \cdot \1, \I)$ respectively, with a class prior of 0.5, and for which the conditional-class probability function is $p^*(x) = \probability(y=1|x) = 1 / (1+\exp(-0.2\sum_i x_i))$. We generate 250 labeled examples from this distribution, and fit a teacher model $p^t$ by minimizing a cross-entropy loss over the random ReLU feature models described in \secref{introduction:illustration}. 

\figref{binary-simulated}(a) shows the mean absolute error $\expectation_x\left[|p^t(x)-\phi(p^*(x))|\right]$ between the teacher predictions and different temperature-scaled transformations $\phi$ of the class probabilities $p^*$, as we increase the number of ReLU features used by the model. In the classical regime, the original teacher model is better at approximating the class-probabilities $p^*$ ($\alpha=1$) than temperature-scaled versions, but, after the double descent phenomenon kicks in, the model is better at approximating transformations of  $p^*$ at larger temperatures ($\alpha\geq 5$). 
\figref{binary-simulated}(b) shows the classification errors for both the teacher model, and a student model trained on 25000 examples labeled by a $10000$-feature teacher. Clearly, the student models outperform identically-structured teacher models trained on the originally labeled examples.

\begin{table}[t]
  \centering
  \caption{Models, sample sizes and training epochs used by teacher and student models. We use fewer epochs to train the students to partially compensate for the larger sizes of their training sets.}
  \vspace{5pt}
  \begin{tabular}{r|lll}
    \hline\hline
    Dataset & CIFAR-10 & SVHN & ImageNet \\
    \hline
    Model & ResNet18 & CNN & ResNet18 \& ResNet50 \\
    \hline
    Teacher Training Examples & 16,500 & 72,526 & 256,233\\
    Student Training Examples & 33,500 & 531,862 & 1,024,934 \\
    Testing Examples & 10,000 & 26,032 & 50,000 \\
    \hline
    Teacher Training Epochs & 1,000 & 1,000 & 100 \\
    Student Training Epochs & 1,000 & 250 & 50\\\hline\hline
  \end{tabular}
  \label{tab:experiment-setup}
\end{table}
\begin{figure*}[t]

\centering

\begin{tabular}{cc}
  \includegraphics[width=0.45\textwidth]{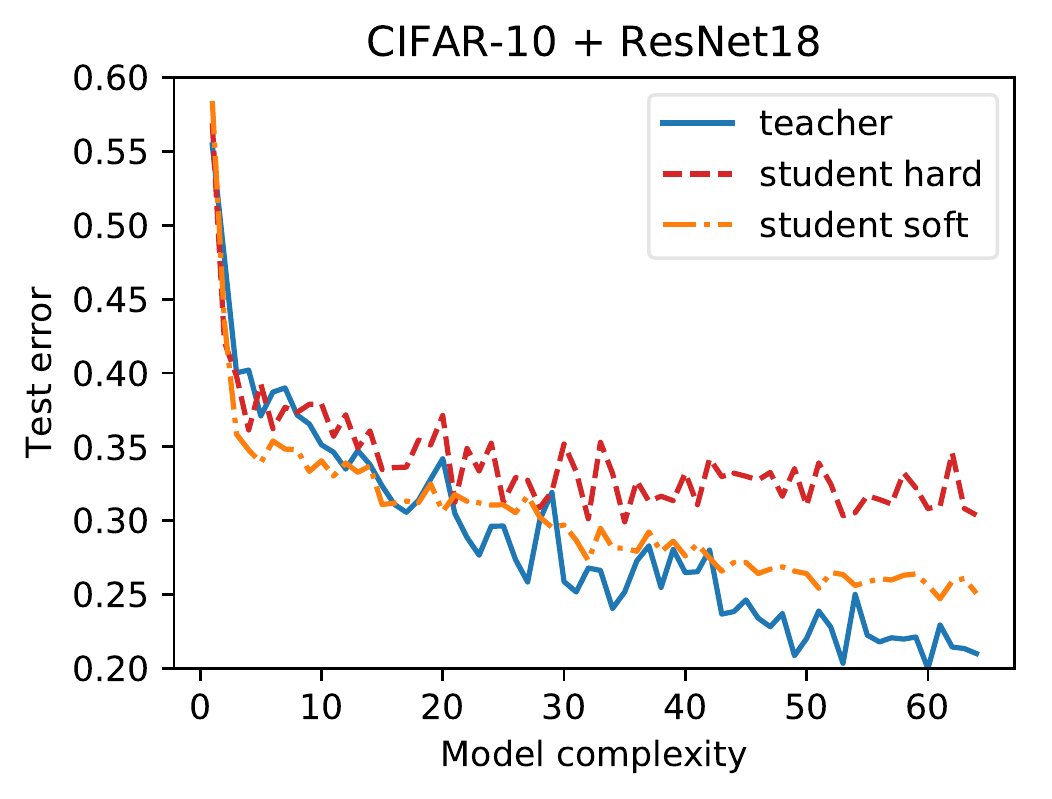} &
  \includegraphics[width=0.45\textwidth]{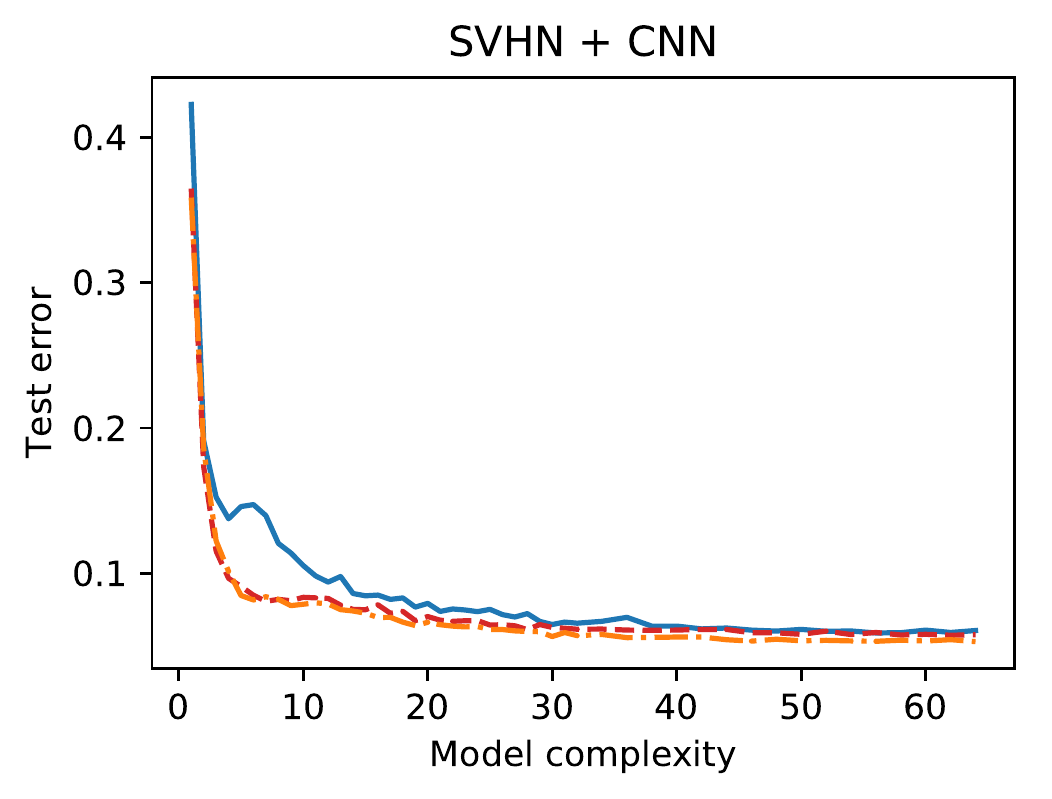} \\
  \includegraphics[width=0.45\textwidth]{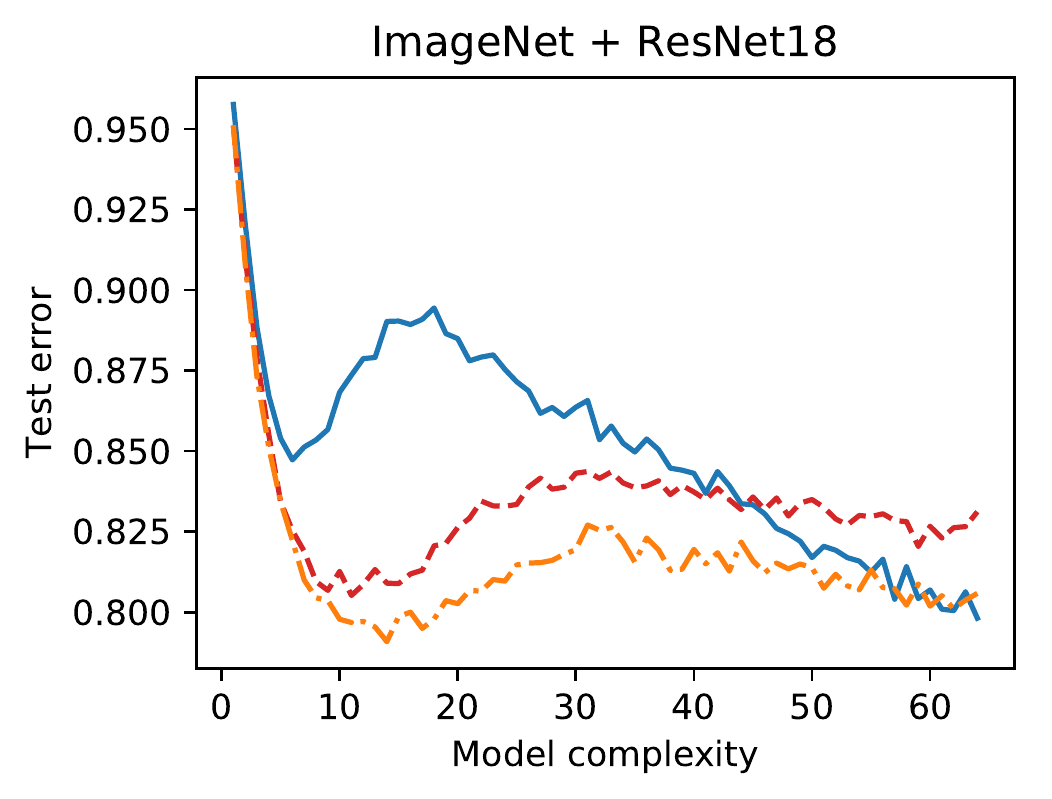} & 
  \includegraphics[width=0.45\textwidth]{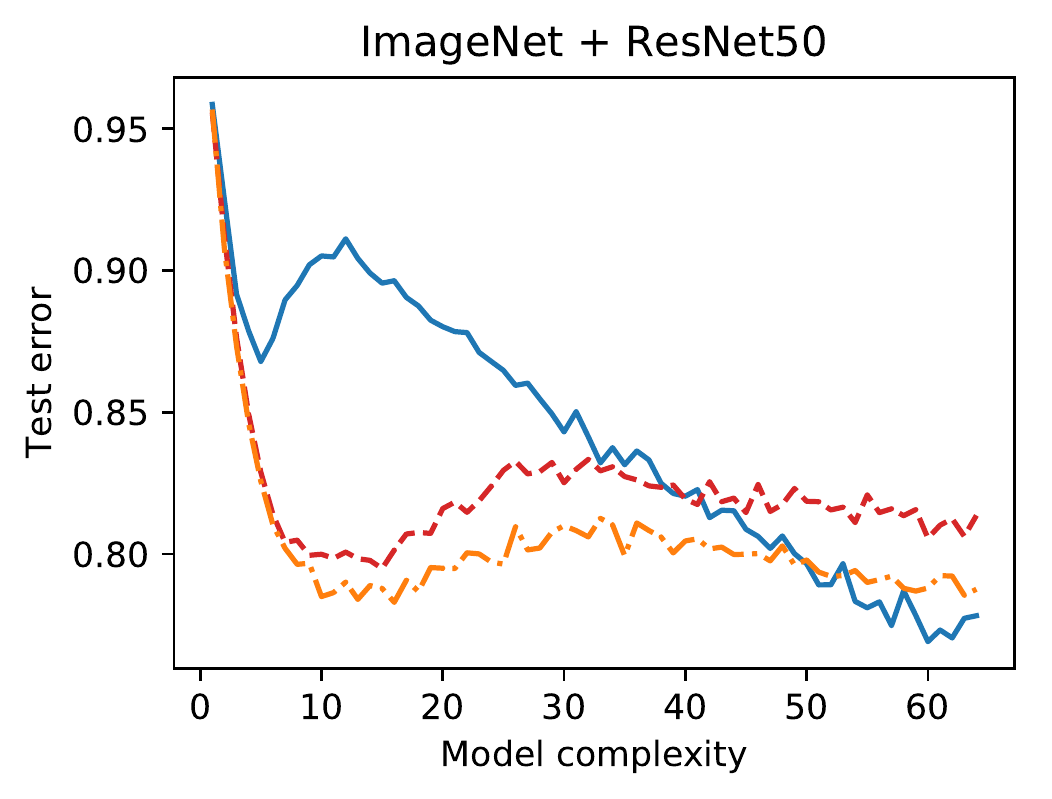}
\end{tabular}
\caption{
  Test error rates as a function of model complexity $m$ for a ``teacher'' model trained on the teacher training set, and for two student models trained on a dataset labeled by the teacher. One student was trained on the raw ``soft'' teacher labels, and the other on ``hard'' labels found by thresholding the teacher.
  Clockwise from top-left: CIFAR-10 using ResNet18 models, SVHN using 5-layer CNN models, ImageNet using ResNet50 models, and ImageNet using ResNet18 models. To avoid ambiguity, for both ImageNet experiments we chose to report top-1 classification errors instead of (the more common choice of) top-5 errors. Since ImageNet has 1000 different classes and the source images were downsized (see \citet{chrabaszcz2017downsampled}), their test error rates are relatively high. Teacher-train/student-train/test splits can be found in \tabref{experiment-setup}. 
  \vspace{-5pt}
}

\label{fig:experiments}

\end{figure*}
\begin{figure*}[t]

\centering

\begin{tabular}{cc}
    \includegraphics[width=0.45\textwidth]{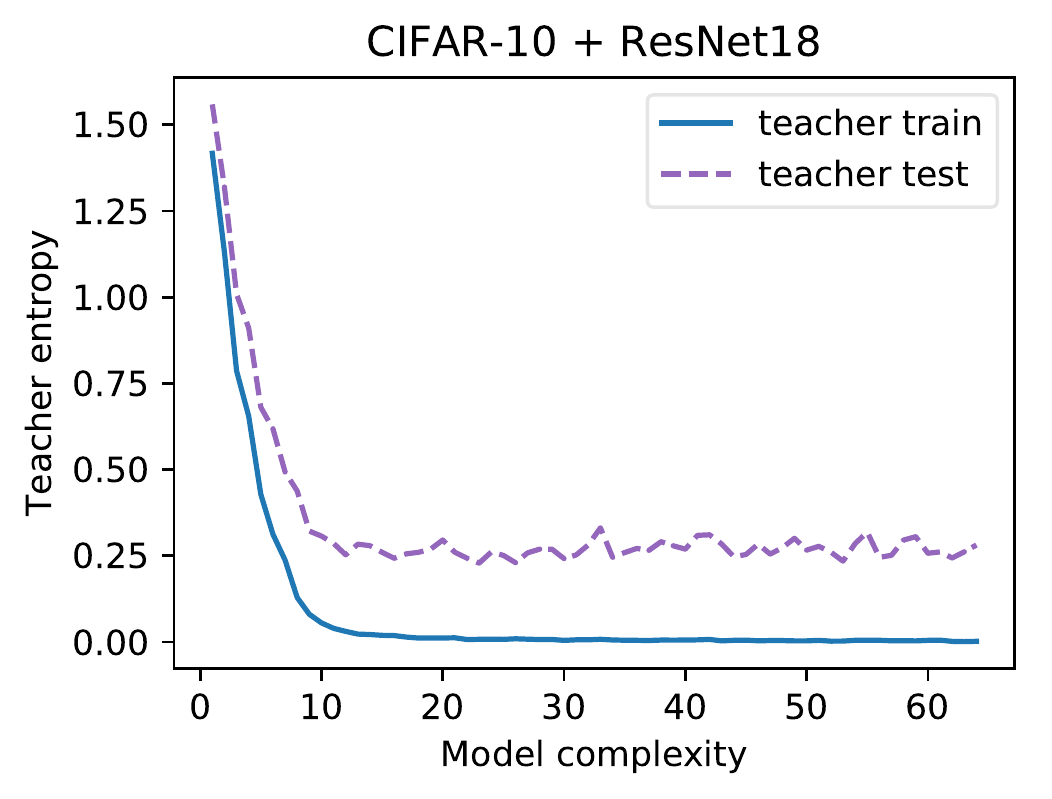} 
    & \includegraphics[width=0.45\textwidth]{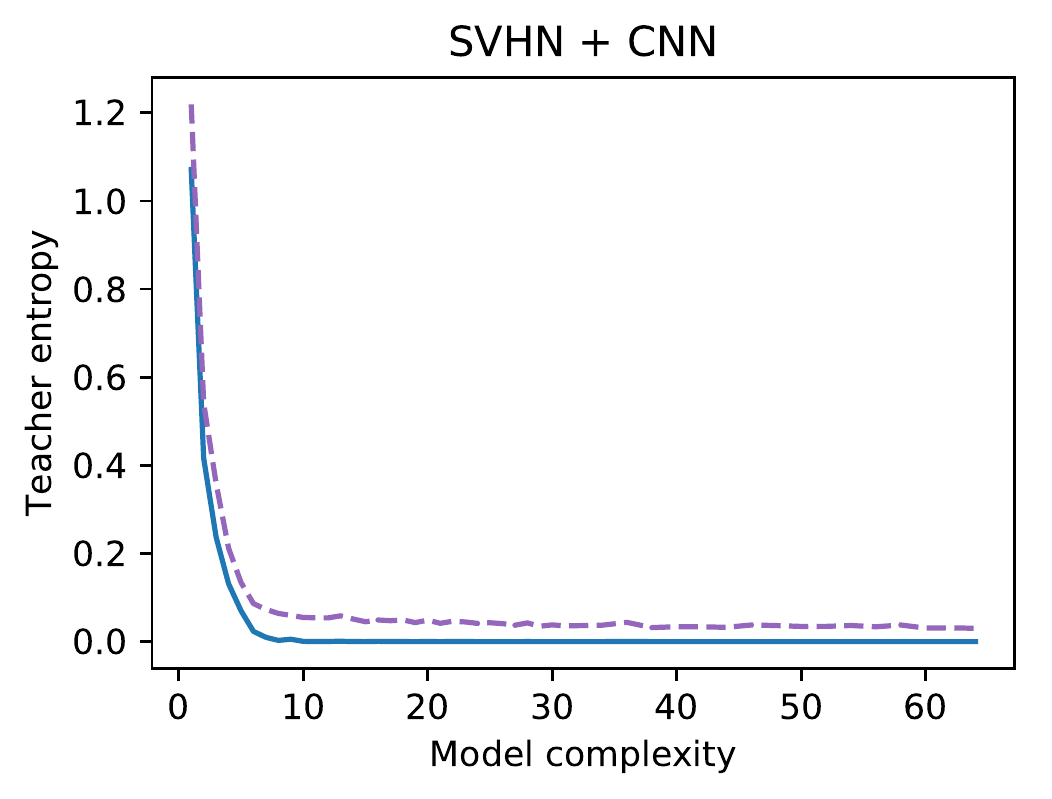} \\
    \includegraphics[width=0.45\textwidth]{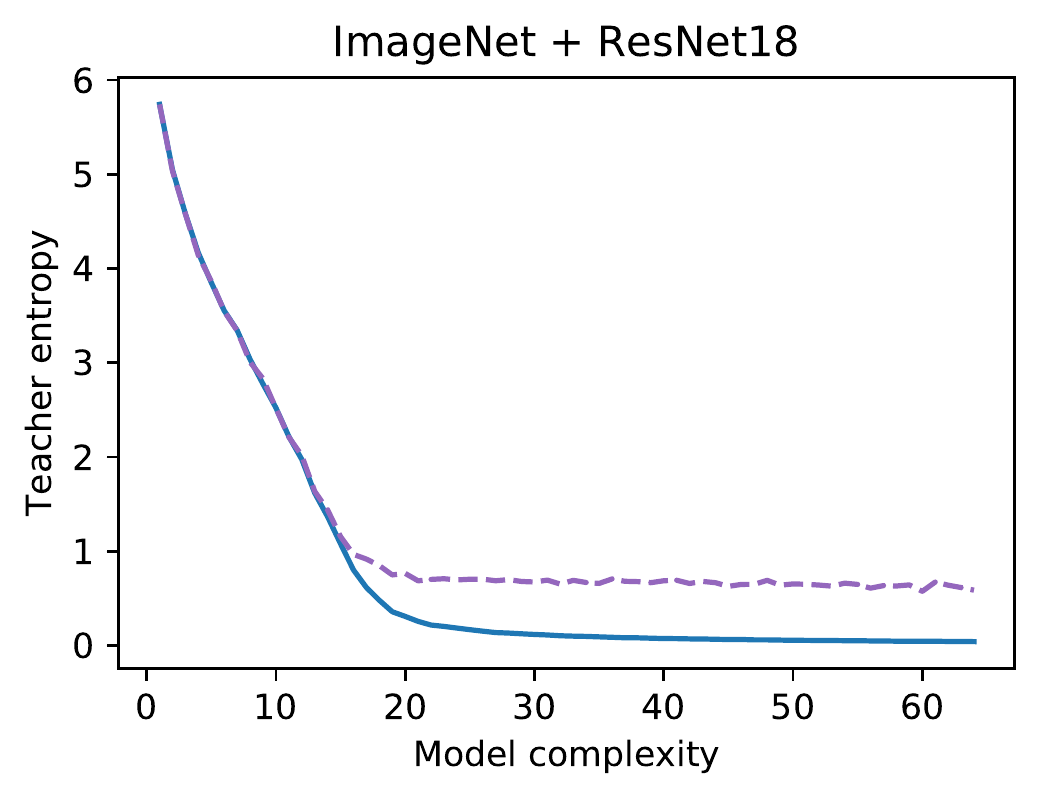} 
    & \includegraphics[width=0.45\textwidth]{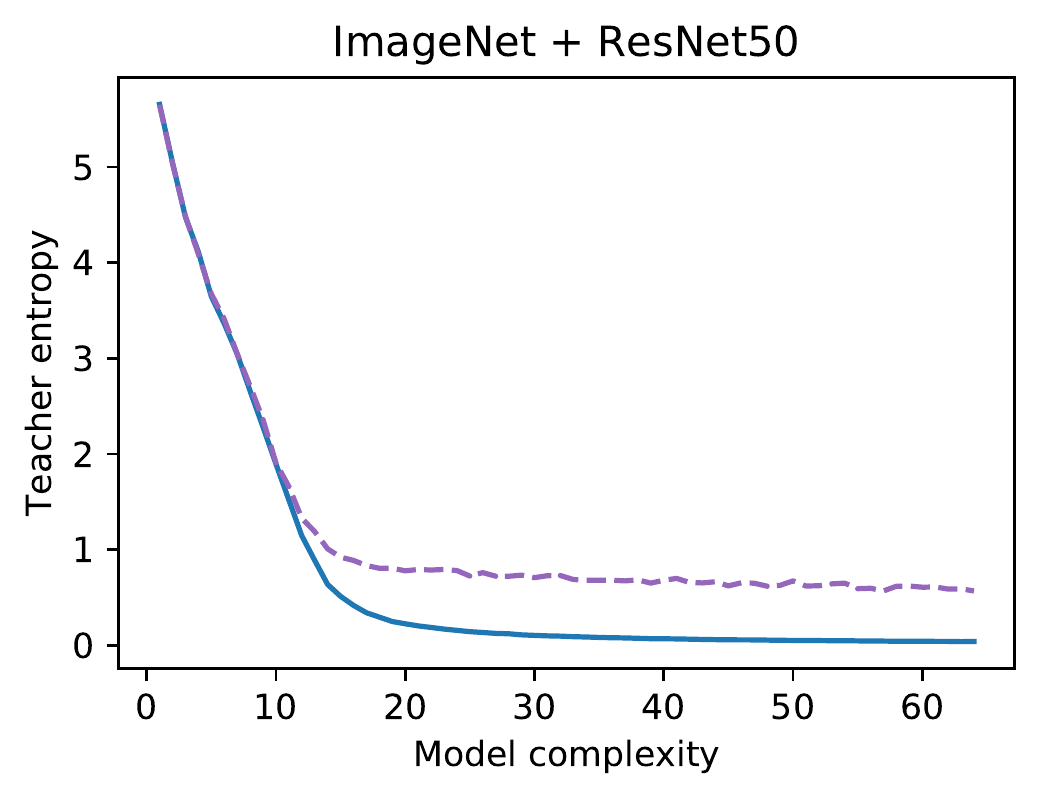}
\end{tabular}

\caption{
  Average entropy of the teacher's predictions on the training and test sets. Lower values indicate a ``harder'' classifier, and higher values a ``softer'' one. As model complexity increases, the teacher does \emph{not} seem to be converging to a hard classifier on the test set.
  Clockwise from top-left: CIFAR-10 using ResNet18 models, SVHN using 5-layer CNN models, ImageNet using ResNet50 models, and ImageNet using ResNet18 models. Teacher-train/student-train/test splits can be found in \tabref{experiment-setup}.
}

\label{fig:teacher-entropy}

\end{figure*}
\begin{figure*}[t]

\centering

\begin{tabular}{cc}
  \includegraphics[width=0.45\textwidth]{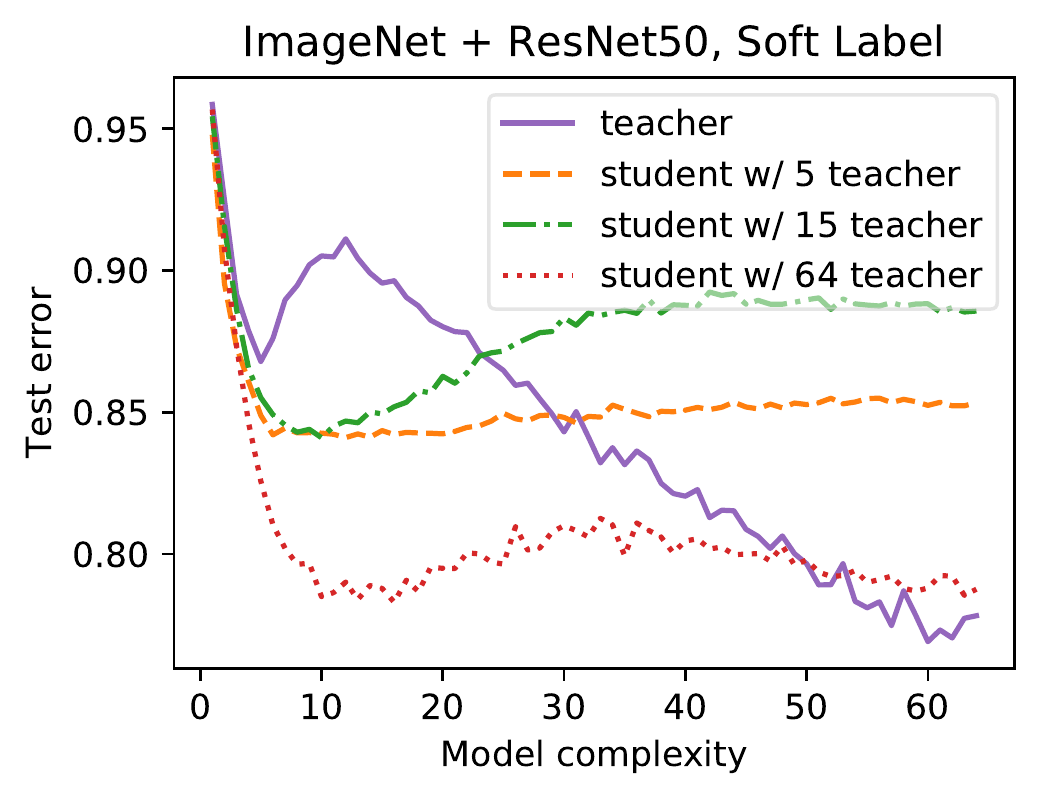} &
  \includegraphics[width=0.45\textwidth]{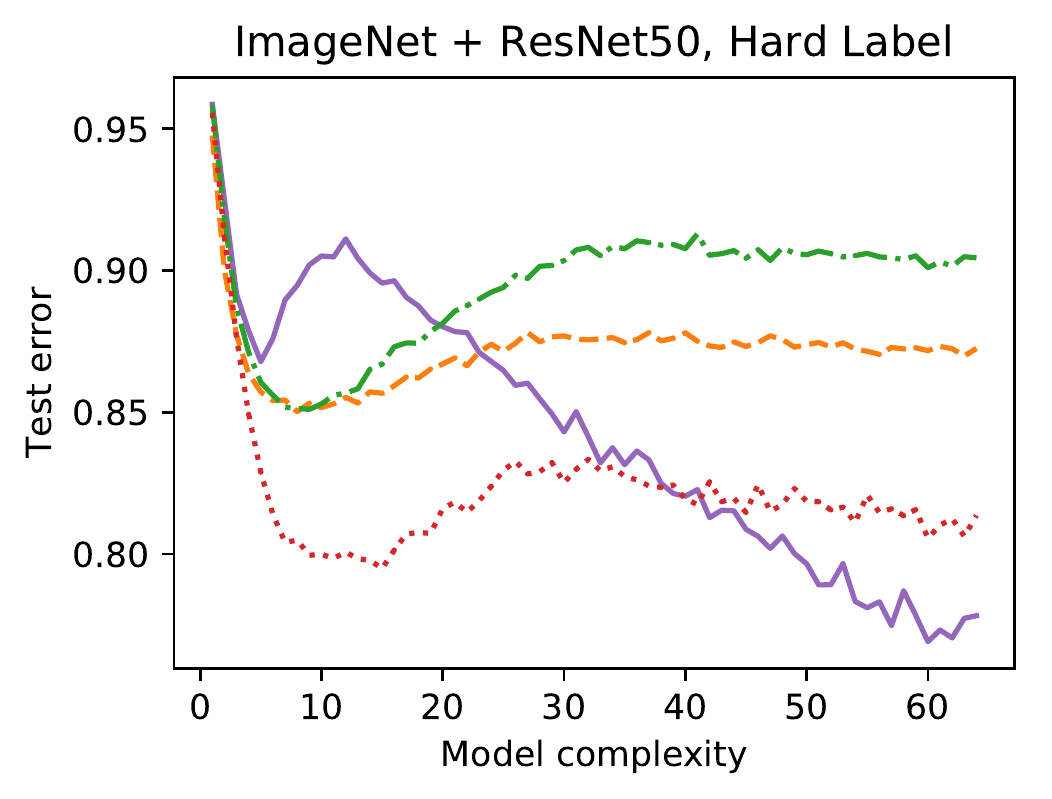}
  \vspace{-3pt}
\end{tabular}

\caption{
  Test error rates as a function of model complexity $m$ for a ``teacher'' model trained on the teacher training set, and for three student models trained on a dataset labeled by teachers of complexities $m=5$, $m=15$ or $m=64$, on ImageNet using ResNet50 models.
  Left: students are trained on the raw soft teacher labels. Right: on hard labels found by thresholding the teacher's predictions.
}

\label{fig:imagenet-three-points}

\end{figure*}
\begin{figure}[t]

\centering

\begin{tabular}{c}
\vspace{-3pt}
    \includegraphics[width=0.45\columnwidth]{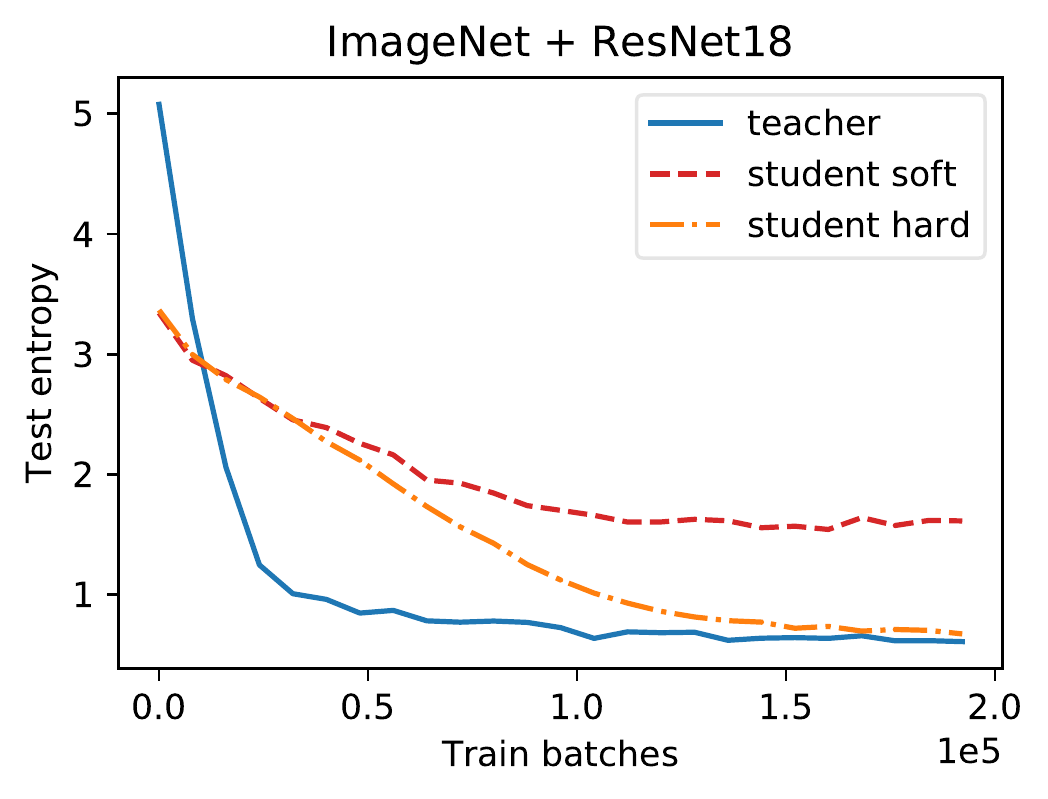} 
    \vspace{-7pt}
\end{tabular}
\caption{
  Average entropy of the most complex teacher and student classifications on the test set, as a function of \emph{training batches} instead of complexity, on ImageNet. Lower values indicate a ``harder'' classifier, and higher values a ``softer'' one. The teacher seems to be converging to a \emph{soft} classifier with entropy $\approx 0.60$.
  \vspace{-7pt}
}

\label{fig:imagenet-resnet18-test-entropy}

\end{figure}

\section{Experiments}\label{sec:experiments}

We applied our proposed double descent \& distillation procedure to three benchmark image classification tasks: CIFAR-10~\citep{Krizhevsky09learningmultiple}, ImageNet~\citep{ILSVRC15, chrabaszcz2017downsampled}, and SVHN~\citep{Netzer2011}, all of which were acquired from the \tensorflow Datasets package~\citep{TFDS}.

On CIFAR-10 we used ResNet18 networks~\citep{he2016deep}, on ImageNet we experimented with both ResNet18 and ResNet50, and on SVHN we used a 5-layer CNN. All models were trained using Adam~\citep{Kingma:2015} with a constant learning rate 0.0001. On the first two datasets, we follow \citet{Nakkiran:2020} by varying the number of filters used by the models in their intermediate layers. Specifically, for the ResNet18 and ResNet50 models we use $m$, $2m$, $4m$ and $8m$ filters for their four residual blocks (respectively). On SVHN, the four convolutional layers likewise contain $m$, $2m$, $4m$ and $8m$ filters.

In all cases, the complexity $m$---\ie the horizontal axis of our double-descent plots---varies between $m=1$ and $m=64$. In \tabref{experiment-setup}, we describe further experimental details, particularly how the dataset was divided into teacher-training, student-training and testing datasets. 

All plots are generated from single runs due to their intensive resource requirements. We provide empirical evidence that these results are stable and reproducible in \appref{experiments}, along with additional experimental details.

\subsection{Double Descent of Students}

\figref{experiments} shows error curves for the teacher, and two student models trained from the most complex teacher model (the rightmost point on the curve), for each of our four experiments. All three models, teacher and students alike, have identical structures (parameterized by $m$, on the horizontal axis), while the student models differ from each other only in whether they were provided with soft teacher labels, or thresholded hard labels.

In the SVHN and ImageNet experiments (top right, bottom left and bottom right), the teacher shows a clear double-descent pattern, especially on ImageNet. The students show weaker double-descent patterns, we believe because they were trained on many more examples than the teacher.

More significantly, except on CIFAR-10, both students (soft and hard) outperform the teacher for simpler models, indicating that, if one desires a simple model, then our proposal of training an overparameterized teacher, and then using its labels on a large unlabeled dataset to train a simple student, can be more effective than training a simple model directly on the original training set (\ie the teacher curve itself).
%
%The fact that this holds for both the soft and hard students shows that our procedure does not depend upon the teacher providing soft labels.

On CIFAR-10, we don't see a clear double-descent pattern in any curves, but we \emph{do} see that the test error decreases even past the point at which the data are memorized. In other words, the portion of the plot that's ``missing'' is the \emph{classical} regime, not the overparameterized regime. Since our approach depends upon the student overfitting less (and therefore overperforming) in the classical regime, one might expect that this would hurt the efficacy of our approach, and indeed the ``hard'' student does not seem to significantly outperform the teacher at any complexity level.

% All four experiments show that soft labels outperform hard labels, and---even on CIFAR-10---the students trained from soft labels outperform their teachers at low complexities.

All four experiments show that soft labels outperform hard labels, and---even on CIFAR-10---the students trained from soft labels outperform the teachers at low complexities.

\subsection{Hard vs. Soft Teacher Labels}\label{sec:experiments:hard-soft}

As we mentioned in \secref{related}, most of the existing work on distillation uses \emph{soft} labels from the teacher to train the student. In our work, however, we rely on highly overparameterized teacher models that can memorize the training data, and therefore might expect that the teacher will make nearly-hard predictions even on held-out data. 
Indeed, as shown in \figref{teacher-entropy}, the teachers do make increasingly confident predictions, but beyond a certain point they do \emph{not} become more confident on the test set. Indeed, we found that while thresholded hard labels \emph{do} work well, soft labels work even \emph{better}, even though we have no reason to expect them to be interpretable as \eg probabilities. The ``best of both worlds'' approach, therefore, would seemingly be to use soft labels from a highly overparameterized teacher.
%
%On the other hand, the over-parameterized teachers do incur a higher log loss than the best model in the classical regime (see \figref{train-test-loss} in \appref{experiments}), perhaps indicating that  the soft labels they provide are not high-quality estimates of probabilities or confidence scores \cite{Menon:2020}.

The fact that the teacher even produces soft labels is somewhat surprising, since complex teachers memorize the training data (this occurs at roughly $m=10$ for CIFAR-10 and SVHN, and $m=20$ for both ImageNet experiments--see the training curves in \figref{teacher-entropy}), and one would therefore expect them to become increasingly confident as the complexity increases, even on held-out examples. 
%%% Moved this sentence earlier %%%
% Interestingly, this does not seem to be the case: the testing curves in \figref{teacher-entropy} show that, even as the model complexity increases, beyond a certain point the teachers do \emph{not} become more confident on the test set.
%%%%%
One might contend that the reason we are not seeing this is due to early stopping, \ie that we terminated optimization before it had the opportunity to converge to a hard classifier. However, \figref{imagenet-resnet18-test-entropy} shows that, even as a function of \emph{training time}, the teacher seems to be converging to a soft classifier.

We speculate that the reason that the teacher can provide %meaningful 
soft labels even when it is more-than-capable of memorizing its training set is related to the reason that overparameterized models generalize unexpectedly well in the first place: perhaps it is identifying and avoiding overconfidence in regions of true uncertainty. 
Some additional evidence for this hypothesis can be found in \figref{train-test-loss} in \appref{experiments}, in which we see that, beyond a certain point, the cross-entropy loss of the teacher on the test set starts to \emph{decrease} as the complexity increases (although it is never competitive with the best ``classical'' model), which wouldn't happen if the teacher was increasingly overconfident on misclassified testing examples. 
Beyond raising this possibility, however, we offer no explanation for this phenomenon.

\subsection{Comparison with Simpler Teachers}

It's natural to ask how our proposed approach compares to the traditional view of distillation, in which the teacher model does not merely provide soft labels, but is chosen to provide \emph{high-quality} soft labels~\egcite{Menon:2020,Zhou:2021}---often taken to be probabilities---which can be accomplished by \eg controlling the model complexity or adding sufficient regularization.
%\TODO{temperature scaling?}

To attempt to answer this question, we compared our proposal---using the most complex teacher that we trained (\ie the rightmost point on the double-descent curve) to train the student---against simpler trainers, chosen from earlier on the double-descent curve,
that have a lower log loss (see \figref{train-test-loss} in \appref{experiments}), indicating that their predictions are likely to be higher-quality probability estimates.
\figref{imagenet-three-points} compares our proposed student (trained from the $m=64$ teacher) against two students trained from simpler teachers ($m=5$ and $m=15$) on ImageNet with ResNet50 models. The optimal bias-variance trade-off point for this teacher occurs at roughly $m=5$,
%
%(see \figref{train-test-loss} in \appref{experiments})
%
and we can see that, while all students outperform the teacher at low complexity levels, the student trained from the most complex teacher model significantly outperforms the alternatives, for both hard and soft labels. More importantly, a student trained from the most complex teacher using hard labels dominates those trained from less complex teachers, regardless of whether the labels provided to the students were hard or soft.

\section{Conclusions and Future Work}\label{sec:conclusion}

The recent excitement about double descent has highlighted the anomalously good performance of overparameterized neural networks, but it has naturally been viewed as something that is only relevant for large models. If one desires a simple model, in the classical regime, subject to the bias-variance trade-off, then why should double descent matter?

As we've shown, it \emph{matters} because the performance of a highly overparameterized model can be---at least partially---transmitted to a simpler model using distillation. If this transmission is accomplished via an unlabeled dataset that is much larger than the training set, then the simple model will overfit less, and thus significantly outperform an equivalent model that was trained on the original training set.

As with double descent, our work is also an atypical application of distillation, since most of the existing literature relies upon the teacher providing \emph{soft} labels (\secrefs{related}{theory}), whereas we have no such requirement. It's true that, when an overparameterized teacher provides soft labels to the student, the result outperforms one provided with hard labels. However, the latter student still performs very well, which indicates that, at least in our setting, the standard ``dark knowledge'' explanation of distillation is incomplete. We contend that the (still mostly-unexplained) double-descent phenomenon fills-in this gap in our understanding.

The fact that our teachers were capable of making reasonably-good soft predictions is itself somewhat surprising, and it happened by \emph{accident}, not design. This raises the question of whether one should expend extra effort to train an overparameterized teacher that also makes \emph{high-quality} soft predictions, \eg using bagging~\citep{Breiman:1996}, \citet{Radosavovic:2018}'s approach, or something else.
This is, we believe, an exciting area for future research.

\newpage
\clearpage

% In the unusual situation where you want a paper to appear in the
% references without citing it in the main text, use \nocite

\bibliography{main}
\bibliographystyle{icml2021}

\label{document:middle}

\newpage
\clearpage
% \onecolumn

\appendix

\label{document:appendix}

\showproofstrue

\section{Proofs}\label{sec:proofs}

One can then plug into \thmref{excess-error-bound} a suitable measure of capacity for student hypothesis class $\cH$. For example, the following corollary uses the Natarajan dimension  of $\cH$~\citep{Natarajan:1989} to bound its capacity.
\begin{cor}{excess-error-bound-cor}
Under the assumptions in \thmref{excess-error-bound}, for a student hypothesis class $\cH$ with Natarajan dimension $d$~\citep{Natarajan:1989}, with  probability at least $1 - \delta$ over draw of $n^u$ unlabeled examples from $D_\cX$, the solution $\hat{h} \in \cH$ to the student empirical risk minimization problem in \eqref{erm} with a fixed teacher $\p^t$ satisfies.
  \begin{equation*}
    R(\hat{h}) - R(h^*) \le 
    {\mathcal{O}\left(\sqrt{\frac{d\log(mn^u) + \log(1/\delta)}{n^u}}\right)}
    + {\min_{h\in \cH}R^t(h) - \min_{h: \cX \>[m]}R^t(h)}
    + {\expectation_{x}\left[\|\p^t(x) - \p^\phi(x)\|_1\right]}.
  \end{equation*}
\end{cor}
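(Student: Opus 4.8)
The plan is to obtain \corref{excess-error-bound-cor} directly from \thmref{excess-error-bound} by replacing the cardinality-based estimation term with one controlled by the Natarajan dimension. First I would isolate where $|\cH|$ actually enters the theorem's bound. Inspecting the proof of \thmref{excess-error-bound}, the only use of $|\cH|$ is in the uniform convergence of the empirical teacher-distilled risk $\hat{R}^t(h) = \frac{1}{n^u}\sum_{x\in S^u}\sum_i p^t_i(x)\1(h(x)\ne i)$ to its expectation $R^t(h)$ over $h\in\cH$; the margin conditions \eqref{argmax} and \eqref{margin} and the teacher-approximation step are entirely independent of how the capacity of $\cH$ is measured. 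Concretely, for finite $\cH$ one applies Hoeffding's inequality to each fixed $h$ (the per-example loss $\sum_i p^t_i(x)\1(h(x)\ne i) = 1 - p^t_{h(x)}(x)$ lies in $[0,1]$) and a union bound over the $|\cH|$ hypotheses, yielding $\sup_{h\in\cH}|R^t(h)-\hat{R}^t(h)| = \mathcal{O}(\sqrt{\log(|\cH|/\delta)/n^u})$. So it suffices to re-derive this single supremum bound using the Natarajan dimension and substitute the result into the theorem, leaving the other two terms untouched.

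For the supremum bound, the key observation is that the distilled loss $1 - p^t_{h(x)}(x)$ depends on $h$ only through the discrete prediction $h(x)\in[m]$. Hence, on a fixed sample $S^u$ of size $n^u$, the vector of per-example losses is completely determined by the labeling $(h(x_1),\ldots,h(x_{n^u}))$, so the number of distinct loss behaviors realizable by $\cH$ is at most the growth function $\Pi_\cH(n^u)$. By Natarajan's lemma \citep{Natarajan:1989}, a class of Natarajan dimension $d$ satisfies $\Pi_\cH(n^u) \le (n^u)^d m^{2d}$, whence $\log \Pi_\cH(n^u) = \mathcal{O}(d\log(mn^u))$. I would then run a standard symmetrization / growth-function uniform-convergence argument for the bounded $[0,1]$-valued loss: after symmetrization the effective number of distinct loss patterns is controlled by $\Pi_\cH(2n^u)$, and a Massart-type bound over these patterns combined with a McDiarmid concentration step gives
\[
  \sup_{h\in\cH}\bigl|R^t(h)-\hat{R}^t(h)\bigr| = \mathcal{O}\!\left(\sqrt{\frac{\log\Pi_\cH(n^u) + \log(1/\delta)}{n^u}}\right) = \mathcal{O}\!\left(\sqrt{\frac{d\log(mn^u) + \log(1/\delta)}{n^u}}\right),
\]
which is exactly the estimation term claimed in the corollary.

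Substituting this refined estimation bound into \thmref{excess-error-bound} in place of the $\mathcal{O}(\sqrt{\log(|\cH|/\delta)/n^u})$ term — while carrying over the student-approximation and teacher-approximation terms verbatim, since they do not reference $|\cH|$ — yields the stated inequality. The main obstacle is the uniform-convergence step: unlike the textbook multiclass $0$-$1$ setting, here the loss is real-valued and reweighted by the teacher's probabilities $p^t_i(x)$, so I must argue that the combinatorial complexity is nonetheless governed solely by the labeling diversity (and hence by $d$ through Natarajan's lemma) rather than by the continuum of loss values. The boundedness of the loss in $[0,1]$ together with its being a fixed function of the predicted label is what makes this reduction go through, letting the discrete growth function — not any real-valued covering number that scales with the teacher — control the rate.
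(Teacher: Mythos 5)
Your proposal is correct and follows essentially the same route as the paper: the corollary is obtained by rerunning the proof of \thmref{excess-error-bound} with the $|\cH|$-based uniform-convergence term replaced by a Natarajan-dimension bound on $\sup_{h\in\cH}|R^t(h)-\hat{R}^t(h)|$, which the paper states as the second part of \lemref{generalization-bound} and justifies by citing Theorem~13 of \citet{Daniely:2015}. Your additional observation---that the distilled loss $1-p^t_{h(x)}(x)$ depends on $h$ only through the discrete label $h(x)$, so the growth function (and hence Natarajan's lemma) still controls the complexity despite the real-valued teacher reweighting---is a correct and slightly more explicit account of the step the paper delegates to that citation.
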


For proving \thmref{excess-error-bound} (and \corref{excess-error-bound-cor}), we will need the following confidence bound on the teacher-distilled risk $R^t$ for a student $h \in \cH$ in terms of its empirical risk. %The proof uses Hoeffding's inequality to bound $|{R}^t(h) - \hat{R}^t(h)|$ for any fixed $h$, followed by a union bound over all $h \in \cH$. 
\begin{lem}{generalization-bound}
Let $\p^t$ be a fixed teacher. Let $\hat{R}^t(h)$ denote the empirical teacher-distilled for a student $h$ on $n^u$ unlabeled examples $S^u$, labeled by the teacher $p^t$:
\[
\hat{R}^t(h) = \frac{1}{n^u}\sum_{x \in S^u}\sum_{i=1}^m p^t_i(x)\1(h(x) \ne i).
\]
Let the expected teacher-distilled risk $R^t(h)$ for the same student be as defined in \eqref{distilled-risk}. 
Fix $\delta \in (0,1)$. Then with  probability at least $1 - \delta$ over draw of the $n^u$ unlabeled examples from $D_\cX$, for any $h \in \cH$:
\[
|{R}^t(h) - \hat{R}^t(h)| \leq \mathcal{O}\left(\sqrt{\frac{\log(|\cH|/\delta)}{n^u}}\right),
\]
where $|\cH|$ can be replaced by a measure of capacity of the student hypothesis class $\cH$. For example, if $\cH$ has a Natarajan dimension $d$~\citep{Natarajan:1989}, then with with  probability at least $1 - \delta$  over draw of the $n^u$ unlabeled examples from $D_\cX$, for any $h \in \cH$:
\[
|{R}^t(h) - \hat{R}^t(h)| \leq \mathcal{O}\left(\sqrt{\frac{d\log(mn^u) + \log(1/\delta)}{n^u}}\right).
\]
\end{lem}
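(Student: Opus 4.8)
The plan is to recognize \lemref{generalization-bound} as a standard one-sided uniform-convergence statement, once the teacher-weighted loss is viewed as a bounded, $[0,1]$-valued function that depends on the student only through its \emph{predicted label}. Concretely, for each $h \in \cH$ define
\[
g_h(x) \;=\; \sum_{i=1}^m p^t_i(x)\,\1(h(x)\ne i) \;=\; 1 - p^t_{h(x)}(x),
\]
and note that, because $\p^t(x)\in\Delta^m$, we have $g_h(x)\in[0,1]$ for every $x$. Then $R^t(h)=\expectation_{x\sim D_\cX}[g_h(x)]$ and $\hat R^t(h)=\frac{1}{n^u}\sum_{x\in S^u} g_h(x)$ are exactly the population and empirical means of a bounded random variable, so the lemma is the assertion that these two means are uniformly close over $h\in\cH$.

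For the finite case, first I would fix a single $h$ and apply Hoeffding's inequality to the i.i.d.\ bounded variables $g_h(x_1),\ldots,g_h(x_{n^u})$, giving $|R^t(h)-\hat R^t(h)|\le \sqrt{\log(2/\delta')/(2n^u)}$ with probability at least $1-\delta'$. Taking a union bound over the $|\cH|$ elements with $\delta'=\delta/|\cH|$ then yields the stated $\mathcal{O}(\sqrt{\log(|\cH|/\delta)/n^u})$ bound simultaneously for all $h\in\cH$.

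For the Natarajan-dimension case, the key observation is that the loss vector $(g_h(x_1),\dots,g_h(x_{n^u}))$ is a deterministic function of the \emph{label pattern} $(h(x_1),\dots,h(x_{n^u}))$, so the number of distinct such vectors realizable on any sample is at most the multiclass growth function $\Pi_\cH(\cdot)$. By Natarajan's lemma~\citep{Natarajan:1989}, $\Pi_\cH(n)\le n^{d}m^{2d}$, hence $\log\Pi_\cH(2n^u)\le d\log(2n^u)+2d\log m=\mathcal{O}(d\log(mn^u))$. I would then run the standard symmetrization argument with a ghost sample, bound the empirical Rademacher complexity of $\{g_h:h\in\cH\}$ on the $2n^u$-point double sample via Massart's finite-class lemma (each loss vector has entries in $[0,1]$ and the set of such vectors has cardinality at most $\Pi_\cH(2n^u)$), and combine this with a bounded-differences concentration step. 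This produces $\sup_{h\in\cH}|R^t(h)-\hat R^t(h)|\le \mathcal{O}(\sqrt{(d\log(mn^u)+\log(1/\delta))/n^u})$, as claimed, and it immediately feeds \thmref{excess-error-bound} and \corref{excess-error-bound-cor}.

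The main obstacle --- and really the only place requiring care --- is that the relevant loss is \emph{not} the plain $0$-$1$ loss for which the Natarajan dimension is classically defined, but the teacher-reweighted loss $g_h$. The resolution is that $g_h$ depends on $h$ only through the discrete prediction $h(x)$, so the combinatorial counting underlying Natarajan's lemma transfers verbatim; the real-valuedness of $g_h$ affects only the concluding step, where Massart's lemma and Hoeffding-type concentration must be invoked for bounded (not necessarily $\{0,1\}$-valued) losses, which they permit without modification. Everything else is routine bookkeeping of constants, absorbed into the $\mathcal{O}(\cdot)$.
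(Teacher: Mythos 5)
Your proof is correct and follows essentially the same route as the paper: Hoeffding's inequality on the bounded teacher-weighted loss plus a union bound for finite $\cH$, and a growth-function/Natarajan-dimension argument for the capacity-based version. The only difference is that where the paper simply cites Theorem~13 of \citet{Daniely:2015} for the Natarajan bound, you sketch the underlying symmetrization--Massart derivation yourself (correctly noting that $g_h$ depends on $h$ only through the predicted label, so the combinatorial counting applies despite the loss being real-valued).
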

\begin{prf}{generalization-bound}
For a fixed $h$, we can straightforwardly apply  Hoeffding's inequality to show that with probability at least $1-\delta$, $|{R}^t(h) - \hat{R}^t(h)| \leq \mathcal{O}\big(\sqrt{{\log(1/\delta)}/{n^u}}\big)$, where we have used the fact that each $p^t_i(x) \in [0,1]$  and $\sum_i p^t_i(x) = 1$. If $\cH$ is finite, one can further take a union bound over all $h \in \cH$ and show that  with probability at least $1-\delta$,  $|{R}^t(h) - \hat{R}^t(h)| \leq \mathcal{O}\big(\sqrt{{\log(|\cH|/\delta)}/{n^u}}\big)$. For an infinite class $\cH$, one typically  replaces the size of $\cH$ with its ``growth function'' for a given number of examples $n^u$~\citep{Daniely:2015}, and further upper bounds the growth function in terms of a capacity term. For instance, we have from Theorem 13 in \citet{Daniely:2015} that when $\cH$ has Natarajan dimension $d$~\citep{Natarajan:1989},  $|{R}^t(h) - \hat{R}^t(h)| \leq \mathcal{O}\Big(\sqrt{\big({d\log(mn^u) + \log(1/\delta)}\big)/{n^u}}\Big)$, as desired.
\end{prf}

Equipped with the generalization bound in \lemref{generalization-bound}, we are now ready to prove \thmref{excess-error-bound}.
\allowdisplaybreaks
\begin{prf}{excess-error-bound}
% \begin{proof}
% The proof uses Hoeffding's inequality to bound $|{R}^t(h) - \hat{R}^t(h)|$ for any fixed $h$, followed by a union bound over all $h \in \cH$. 
% \end{proof}
%
We will use $\bar{h} \in \cH$ to denote a student model 
which minimizes the teacher-distilled risk $R^t$, \ie 
for which $R^t(\bar{h}) \leq R^t(h), \forall h \in \cH$.
Expanding the excess risk for $\hat{h}$ using \eqref{pop-risk}, we have:
%
% \allowdisplaybreaks
\begin{align*}
  \MoveEqLeft R(\hat{h}) \,-\, R(h^*) \\
  &=
  \expectation_x\left[\sum_{i=1}^m p^*_i(x)\left(\1(\hat{h}(x) \ne i) - \1(h^*(x) \ne i)\right)\right] \\
  &=
  \expectation_x\left[p^*_{h^*(x)}(x) - p^*_{\hat{h}(x)}(x)\right] \\
  &=\expectation_x\left[\max_i p^*_{i}(x) - p^*_{\hat{h}(x)}(x)\right]
  ~~~~\text{(from the definition of $h^*(x)$)} \\
  &\leq \expectation_x\left[\max_i p^\phi_{i}(x) - p^\phi_{\hat{h}(x)}(x)\right]
  ~~~~\text{(from \eqref{margin})} \\
  &=
  \expectation_x\left[p^\phi_{h^*(x)}(x) - p^\phi_{\hat{h}(x)}(x)\right]
  ~~~~\text{(from \eqref{argmax}, $h^*(x)$ is also the maximizing index for $\p^\phi(x)$)} \\
  &=
  \expectation_x\left[\sum_{i=1}^m p^\phi_i(x)\left(\1(\hat{h}(x) \ne i) - \1(h^*(x) \ne i)\right)\right] \\
  &=
  \expectation_x\left[\sum_{i=1}^m p^t_i(x) \left(\1(\hat{h}(x) \ne i) - \1(h^*(x) \ne i)\right)\right]
  \,+\,
  \expectation_x\left[\sum_{i=1}^m (p^\phi_i(x) - p^t_i(x))  \left(\1(\hat{h}(x) \ne i) - \1(h^*(x) \ne i)\right)\right]
  \\
  &\leq
  \expectation_x\left[\sum_{i=1}^m p^t_i(x)\left(\1(\hat{h}(x) \ne i) - \1(h^*(x) \ne i)\right)\right]\\
  &
  \hspace{1.67cm}
  \,+\, \expectation_x\left[\|\p^t(x) - \p^\phi(x)\|_1\max_i\left|\1(\hat{h}(x) \ne i) - \1(h^*(x) \ne i)\right|\right]\\ 
  &\hspace{9cm}\text{(using H\"{o}lder's inequality)} \\
    &\leq
  \expectation_x\left[\sum_{i=1}^m p^t_i(x)\left(\1(\hat{h}(x) \ne i) - \1(h^*(x) \ne i)\right)\right]
  \,+\, \expectation_x\left[\|\p^t(x) - \p^\phi(x)\|_1\right]\\
  &=
  R^t(\hat{h}) \,-\, R^t(h^*)
  \,+\, \expectation_x\left[\|\p^t(x) - \p^\phi(x)\|_1\right]~~~~\text{(from \eqref{distilled-risk})} \\
  &\leq
  R^t(\hat{h}) \,-\, \min_{h:\cX\>[m]}R^t(h)
  \,+\, \expectation_x\left[\|\p^t(x) - \p^\phi(x)\|_1\right] \\
  &=
  R^t(\hat{h}) \,-\, \min_{h\in \cH}R^t(h) \,+\,
  \min_{h\in \cH}R^t(h) \,-\,
  \min_{h:\cX\>[m]}R^t(h^*)
  \,+\, \expectation_x\left[\|\p^t(x) - \p^\phi(x)\|_1\right] \\
  &=
  R^t(\hat{h}) \,-\, R^t(\bar{h}) \,+\,
  \min_{h\in \cH}R^t(h) \,-\,
  \min_{h:\cX\>[m]}R^t(h^*)
  \,+\, \expectation_x\left[\|\p^t(x) - \p^\phi(x)\|_1\right] \\
  &\hspace{9cm}\text{(from definition of $\bar{h}$)}
  \\
  &=
  R^t(\hat{h}) \,-\, 
  \hat{R}^t(\hat{h}) \,+\, \hat{R}^t(\hat{h})
  \,-\,
  R^t(\bar{h}) \,+\,
  \min_{h\in \cH}R^t(h) \,-\,
  \min_{h:\cX\>[m]}R^t(h^*)
  \,+\, \expectation_x\left[\|\p^t(x) - \p^\phi(x)\|_1\right] \\
  &\leq
  R^t(\hat{h}) \,-\, 
  \hat{R}^t(\hat{h}) \,+\, \hat{R}^t(\bar{h})
  \,-\,
  R^t(\bar{h}) \,+\,
  \min_{h\in \cH}R^t(h) \,-\,
  \min_{h:\cX\>[m]}R^t(h^*)
  \,+\, \expectation_x\left[\|\p^t(x) - \p^\phi(x)\|_1\right]\\
  &\hspace{9cm}\text{(because $\hat{h}$ minimizes the empirical risk $\hat{R}^t$)}
  \\
  &\leq
  2\sup_{h\in\cH}|R^t(h) \,-\, 
  \hat{R}^t(h)| \,+\,
  \min_{h\in \cH}R^t(h) \,-\,
  \min_{h:\cX\>[m]}R^t(h^*)
  \,+\, \expectation_x\left[\|\p^t(x) - \p^\phi(x)\|_1\right] \\
  &\leq
  \mathcal{O}\left(\sqrt{\frac{\log(|\cH|/\delta)}{n^u}}\right)
  \,+\, 
  \min_{h\in \cH}R^t(h) \,-\, \min_{h: \cX \>[m]}R^t(h) \,+\,
  \expectation_{x}\left[\|\p^t(x) - \p^\phi(x)\|_1\right],
\end{align*}
where the last statement holds with probability $\geq 1-\delta$ over draw of $S^u$, and follows from the first bound in \lemref{generalization-bound}.
% \TODO{Elaborate!}
%
\end{prf}
%One can then plug into \thmref{excess-error-bound} a suitable measure of capacity for student hypothesis class $\cH$. For example, using the  second part of \lemref{generalization-bound}, we have the following corollary.

To prove \corref{excess-error-bound-cor}, we instead apply the second bound from \lemref{generalization-bound} in the last step.

\subsection{Examples of Margin-preserving Transformations}\label{app:temperature-scaling}
\begin{lem}{temp-scale}
The following transformation functions satisfy the margin conditions stated  in \eqrefs{argmax}{margin} in \thmref{excess-error-bound}:
\begin{enumerate}
    \item $\phi_i(z) = \1(\argmax_j z_j = i)$
    \item  $\phi_i(z)  = \frac{z_i^\alpha}{\sum_j z_j^\alpha}$ with $\alpha > 1$.
\end{enumerate}
\end{lem}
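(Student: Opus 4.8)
The plan is to verify, separately for each of the two maps $\phi$, the two defining hypotheses of \thmref{excess-error-bound}: the argmax-preservation property \eqref{argmax} and the margin-expansion property \eqref{margin}. Since both conditions are pointwise in $x$, it suffices to fix an arbitrary probability vector $z \in \Delta^m$, set $y^\star = \argmax_k z_k$ (ties broken toward the larger index, as in the theorem) and $a = z_{y^\star} = \max_k z_k$, and show that $\argmax_k \phi_k(z) = y^\star$ together with $\max_k \phi_k(z) - \phi_j(z) \geq a - z_j$ for every $j \in [m]$. Specializing to $z = \p^*(x)$ then recovers \eqrefs{argmax}{margin}.

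For the hard-thresholding map $\phi_i(z) = \1(\argmax_j z_j = i)$, both conditions are essentially immediate. The vector $\phi(z)$ is the one-hot indicator at $y^\star$, so $\argmax_k \phi_k(z) = y^\star$ under the same tie-breaking convention, and $\max_k \phi_k(z) = 1$. For the margin, when $j = y^\star$ both sides vanish ($1-1=0$ and $a-a=0$), and when $j \neq y^\star$ the left-hand side equals $1 - 0 = 1$, which dominates $a - z_j \leq a \leq 1$ because $z$ is a probability vector. Hence \eqref{margin} holds, with equality exactly at $j = y^\star$.

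The substantive case is temperature scaling $\phi_i(z) = z_i^\alpha / \sum_k z_k^\alpha$ with $\alpha > 1$. Argmax preservation follows from strict monotonicity of $t \mapsto t^\alpha$ on $[0,1]$: this map preserves the ordering of the coordinates and any ties among them, and normalizing by the common denominator does not alter the argmax, so $\max_k \phi_k(z) = a^\alpha / S$ with $S = \sum_k z_k^\alpha$. After clearing the positive denominator $S$, the margin condition reduces to the elementary inequality
\[
a^\alpha - z_j^\alpha \;\geq\; (a - z_j)\,S \qquad \text{for all } j \in [m].
\]
The key observation I would use here is that, because $z$ lies on the simplex, the denominator obeys $S = \sum_k z_k\, z_k^{\alpha-1} \leq a^{\alpha-1}\sum_k z_k = a^{\alpha-1}$, using $z_k \leq a$ and $\alpha - 1 \geq 0$. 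Combined with the one-variable bound $a^\alpha - z_j^\alpha \geq a^{\alpha-1}(a - z_j)$ — which reduces to $a^{\alpha-1} \geq z_j^{\alpha-1}$ and hence to $a \geq z_j$ — this gives $a^\alpha - z_j^\alpha \geq a^{\alpha-1}(a - z_j) \geq S(a - z_j)$, as required.

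I expect the denominator bound $S \leq a^{\alpha-1}$ to be the one genuinely load-bearing step: it is what turns the qualitative idea that ``sharpening raises the peak relative to the others'' into the quantitative margin statement, and it is the only place where membership in $\Delta^m$ (i.e.\ $\sum_k z_k = 1$) is used. Everything else is monotonicity of power functions together with case analysis, so the remaining care is only in the boundary cases $z_j = 0$ and $j = y^\star$, where the inequalities degenerate to equalities, and in confirming that both transformations respect the stated tie-breaking convention.
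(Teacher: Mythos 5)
Your proof is correct and follows essentially the same route as the paper's: the hard-thresholding case is handled by the same trivial case analysis, and for temperature scaling your two inequalities — the denominator bound $\sum_k z_k^\alpha \le a^{\alpha-1}\sum_k z_k = a^{\alpha-1}$ and the numerator bound $a^\alpha - z_j^\alpha \ge a^{\alpha-1}(a - z_j)$ — are exactly the two inequality steps in the paper's chain, merely written with the denominator cleared first. Your identification of the simplex-based bound $S \le a^{\alpha-1}$ as the load-bearing step matches where the paper does its real work.
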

\begin{proof}
For the first transformation, $\phi_i(z) = \1(\argmax_j z_j = i)$, \eqref{argmax} is trivially true, and because $\max_i\phi_i(z) - \phi_j(z) \in \{0,1\}, \,\forall j \in [m]$, \eqref{margin} also holds. For the second transformation, $\phi_i(z)  = z_i^\alpha / \sum_j z_j^\alpha$ with $\alpha > 1$, \eqref{argmax} follows from  $z_i > z_j \iff z_i^\alpha > z_j^\alpha$ and $z_i = z_j \iff z_i^\alpha = z_j^\alpha$. We now show that \eqref{margin} also holds for this transformation. Fix $z \in \Delta_m$, and let $i^* = \argmax_i z_i$, with ties broken in favor of the larger index. Note that $z^\alpha_{i^*} \geq z^\alpha_j, \forall j \in [m].$ We then have for any $j \in [m]$,
\begin{eqnarray*}
\max_i\phi_{i}(z) - \phi_{j}(z) &=&
% \phi_{i^*}(z) - \phi_{j}(z) ~=~ 
\frac{z_{i^*}^\alpha - z_{j}^\alpha}{\sum_k z_k^\alpha}~\geq~\frac{z_{i^*}^\alpha - z_{j}^\alpha}{\sum_k z_{i^*}^{\alpha-1}z_k}~=~\frac{z_{i^*}^\alpha - z_{j}^\alpha}{ z_{i^*}^{\alpha-1}\sum_kz_k}~=~\frac{z_{i^*}^\alpha - z_{j}^\alpha}{ z_{i^*}^{\alpha-1}(1)}\\
&=&z_{i^*} - z_{j}\left(\frac{z_j}{z_{i^*}}\right)^{\alpha-1}
~\geq~ z_{i^*} - z_{j}\left(1\right)^{\alpha-1}~=~ z_{i^*} - z_{j}
~=~ \max_i z_{i} - z_{j},~~\text{as desired.}
\end{eqnarray*}
\end{proof}
\section{Additional Experimental Details}\label{app:experiments}

\begin{figure*}[t]

\centering

\begin{tabular}{cc}
    \includegraphics[width=0.45\textwidth]{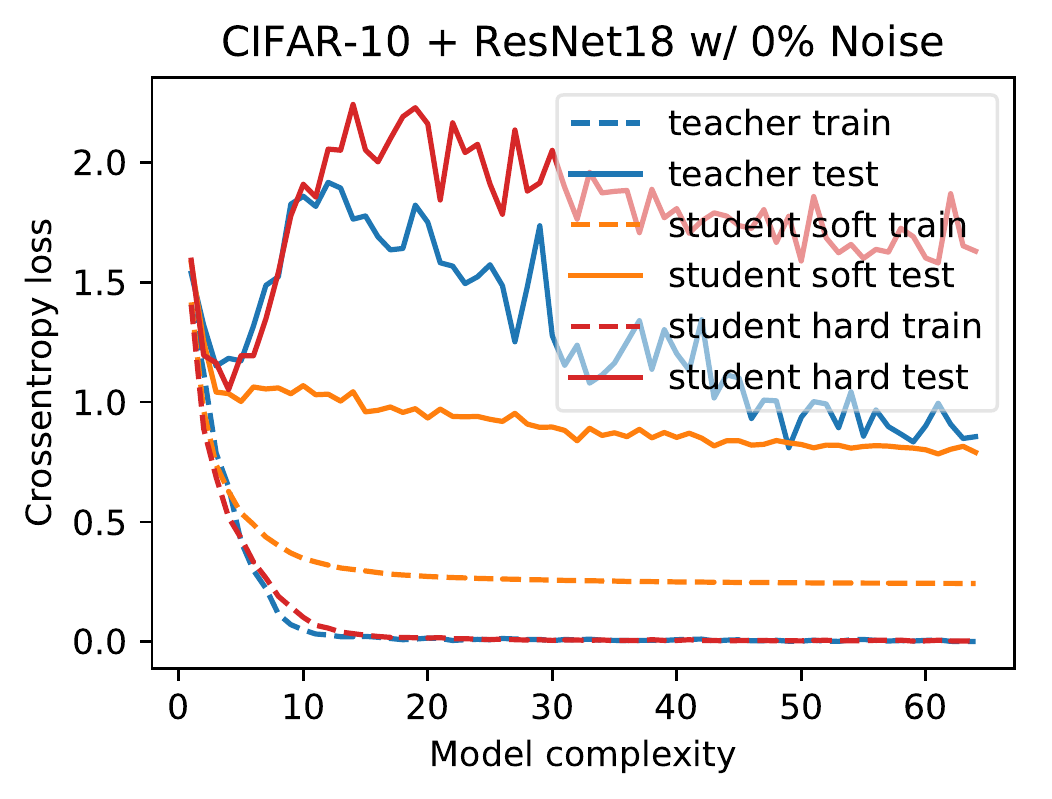} 
    & \includegraphics[width=0.45\textwidth]{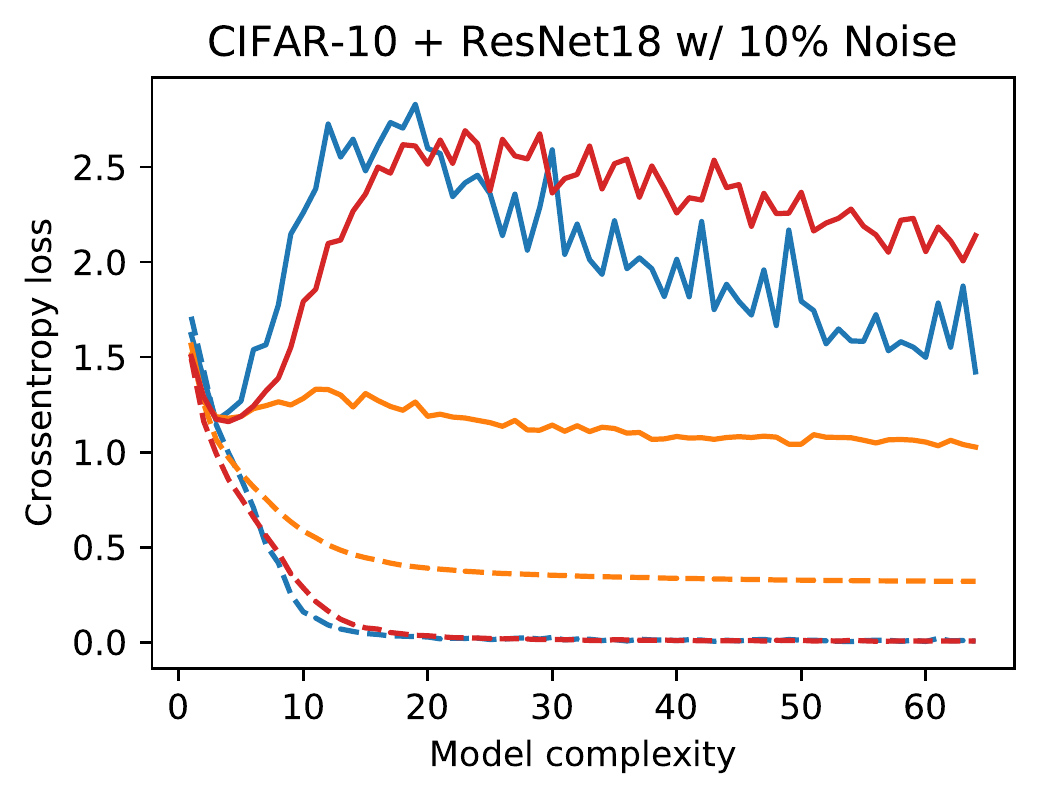} \\
    \includegraphics[width=0.45\textwidth]{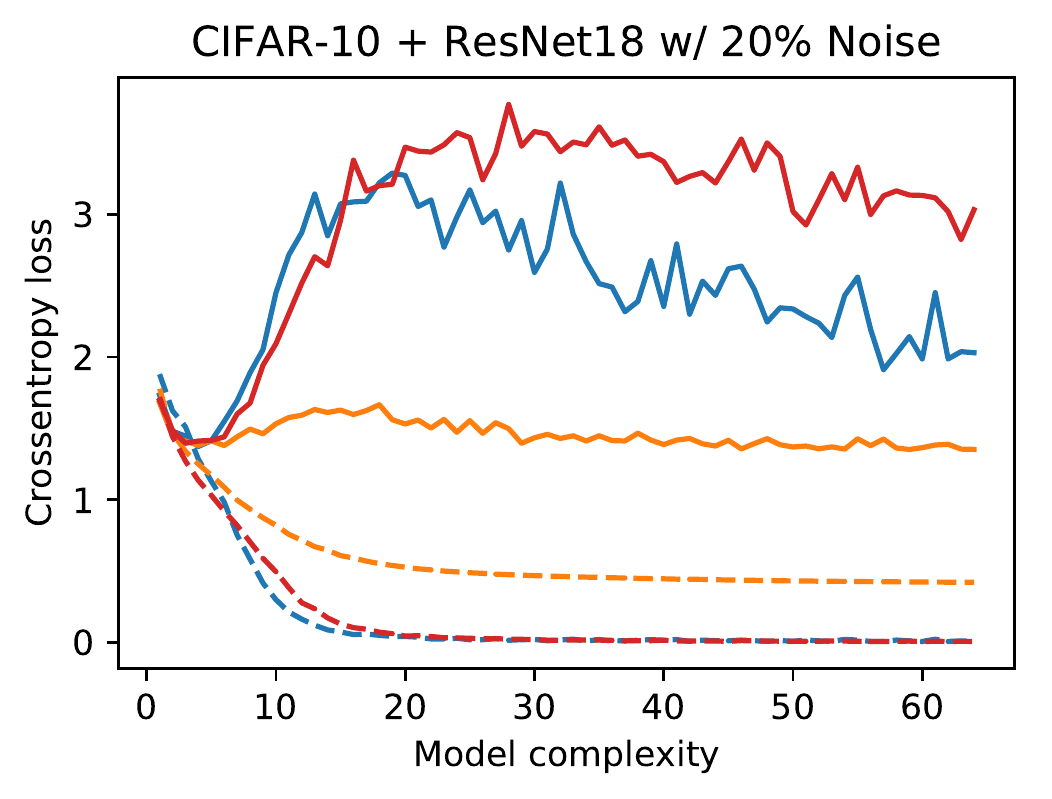} 
    & \includegraphics[width=0.45\textwidth]{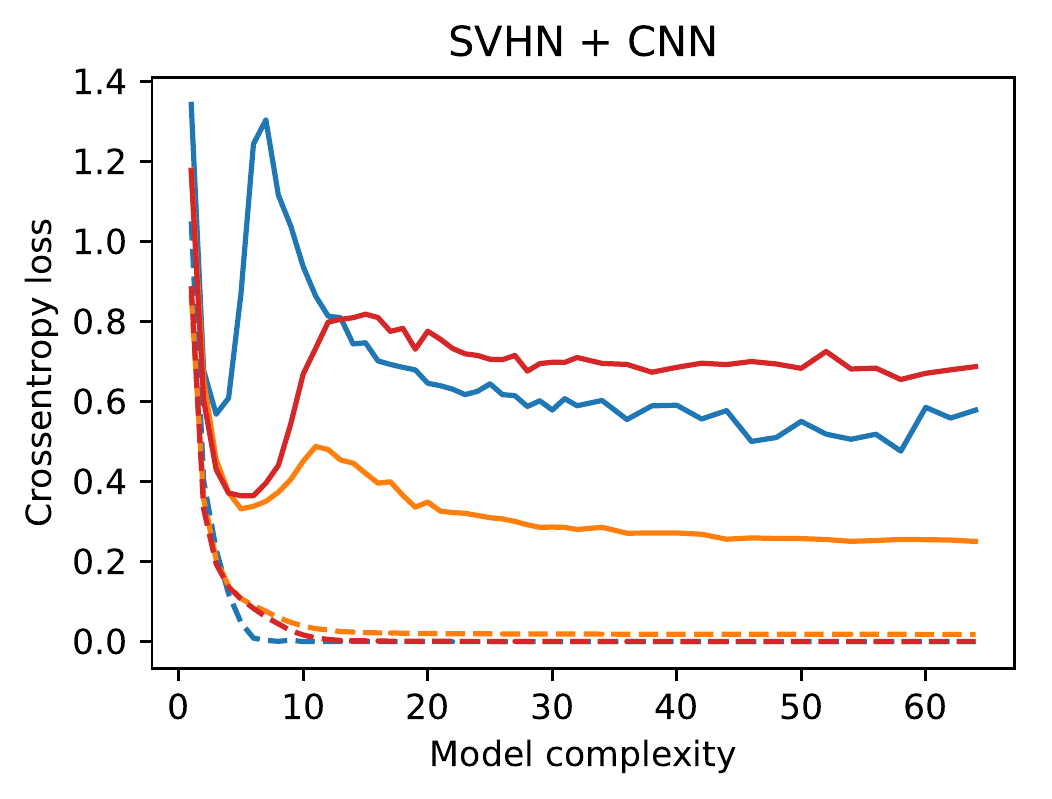} \\
    \includegraphics[width=0.45\textwidth]{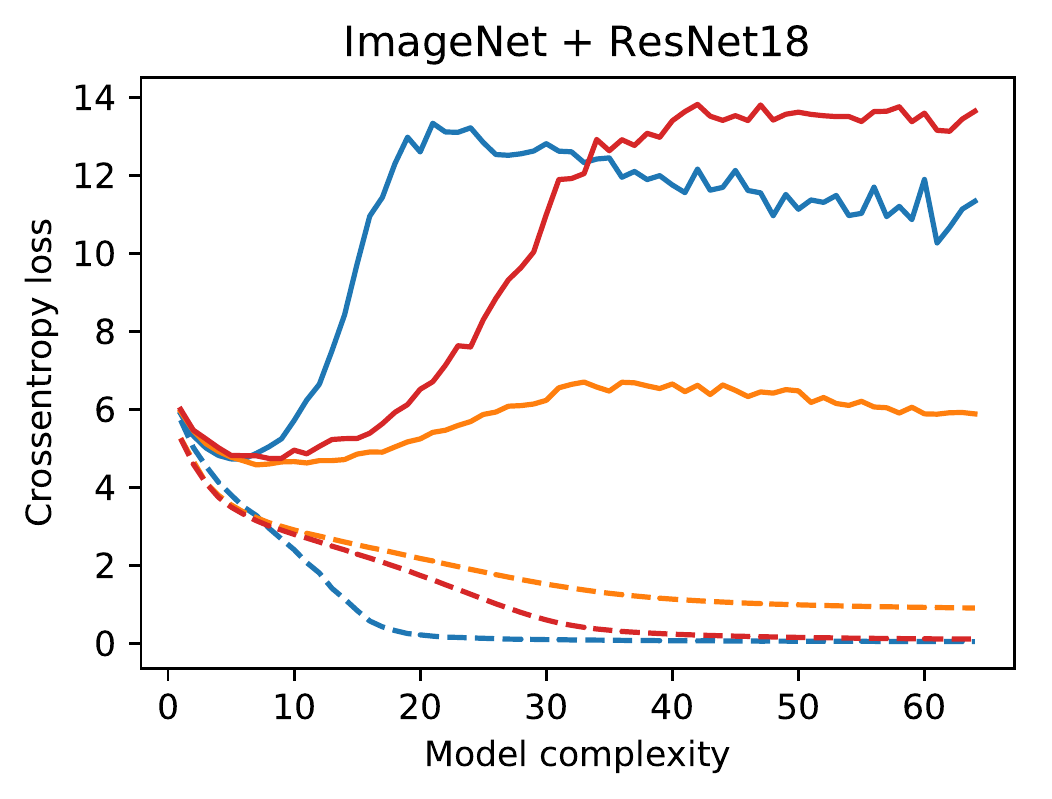} 
    & \includegraphics[width=0.45\textwidth]{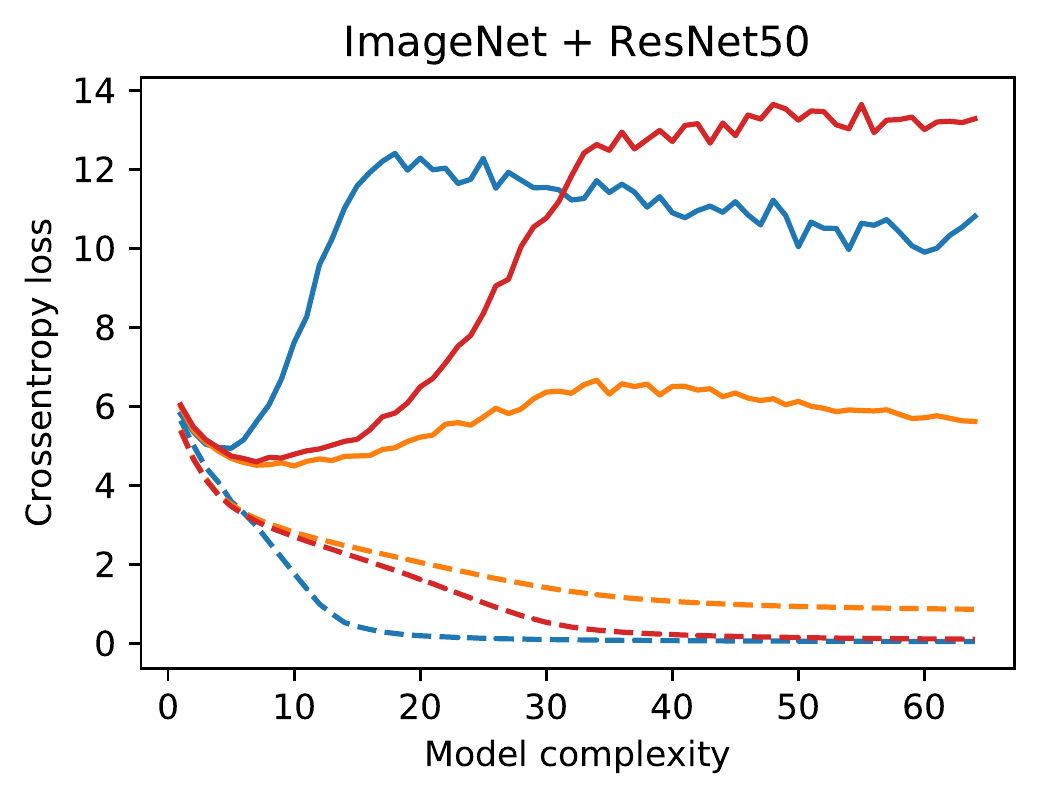} \\
    
\end{tabular}

\caption{
  Plots of the training and testing cross-entropy losses for all of our experiments (the extra two CIFAR-10 experiments are described in \appref{experiments:noise}). These are included mostly for completeness, but it's notable that, beyond a certain point, the testing cross-entropy losses of the teacher \emph{decrease} as a function of model complexity, which is further evidence that highly overparameterized models can generalize better.
}

\label{fig:train-test-loss}

\end{figure*}
\textbf{CIFAR-10}: We used the standard CIFAR-10 train and test splits, while the train split was further randomly split into three folds - one fold was used to train the teacher and the other two folds were labeled by the teacher to train the students.

During teacher training, we injected label noise to the training data by randomly flipping 0\% (i.e. no noise), 10\% and 20\% of the labels to a wrong class with equal chances. We also augmented the 32-pixel by 32-pixel input images by (1) adding a 4-pixel padding along each side and randomly cropping back to 32 pixels, and (2) flipping vertically with 1/2 probability. 

\textbf{SVHN}: We recombined the train and the extra splits of SVHN (\texttt{svhn\_cropped}) from \citet{TFDS}, subsampled 12\% examples to train the teacher and asked the teacher to label the rest 88\% examples to train the students. We also transformed the input images to grayscale before passing them to the CNN. 

The 5-layer CNN models consist of 4 convolutional layers and a final dense layer. Each convolutional layer applies 2D convolution, batch normalization, ReLU activation and max pooling sequentially.   

\textbf{ImageNet}: We worked with the resized ImageNet (\texttt{imagenet\_resized/64x64}) dataset \citep{chrabaszcz2017downsampled} containing 64-pixel by 64-pixel images. We applied 20\% of the standard training examples to teacher training and used the rest 80\% examples for labeling and student training. 

\subsection{Memorization of Training Sets}

\begin{figure*}[t]

\centering

\begin{tabular}{cc}
    \includegraphics[width=0.45\textwidth]{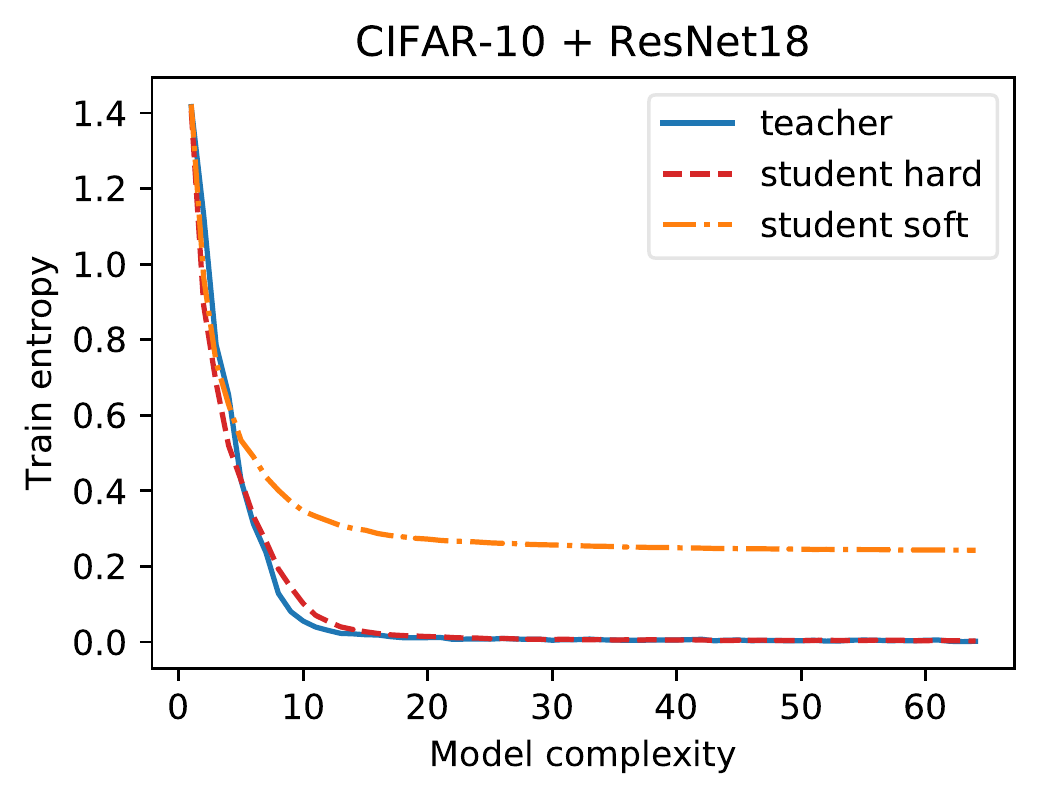} 
    & \includegraphics[width=0.45\textwidth]{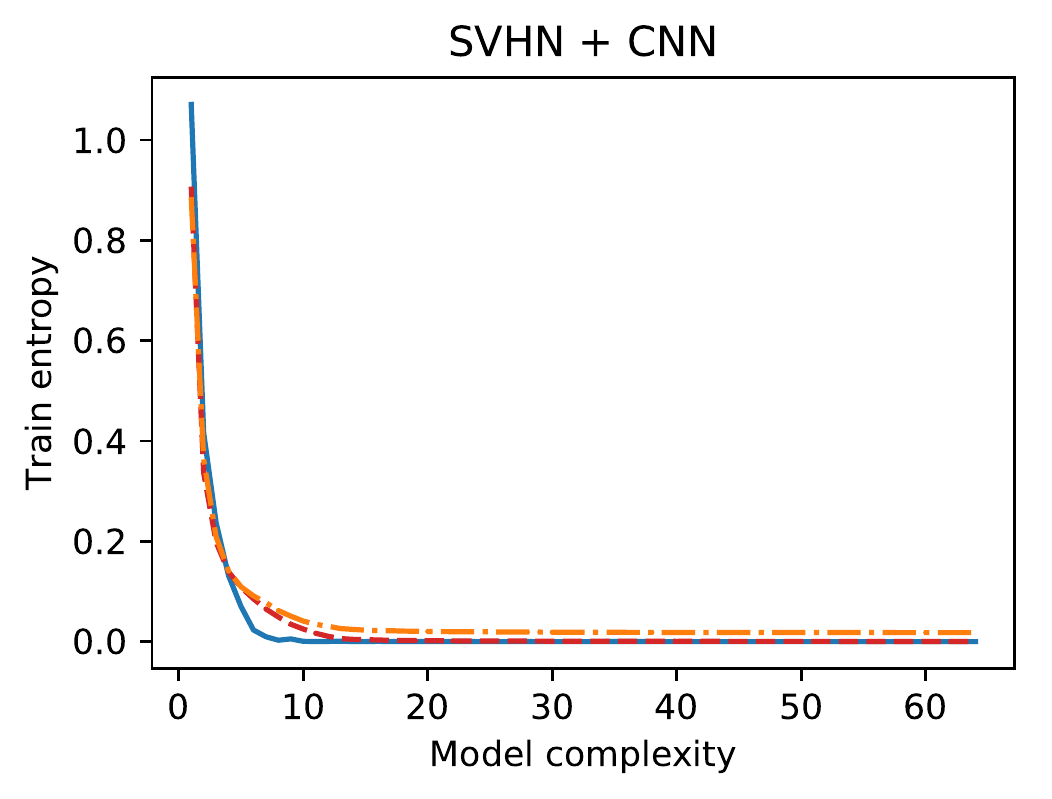} \\
    \includegraphics[width=0.45\textwidth]{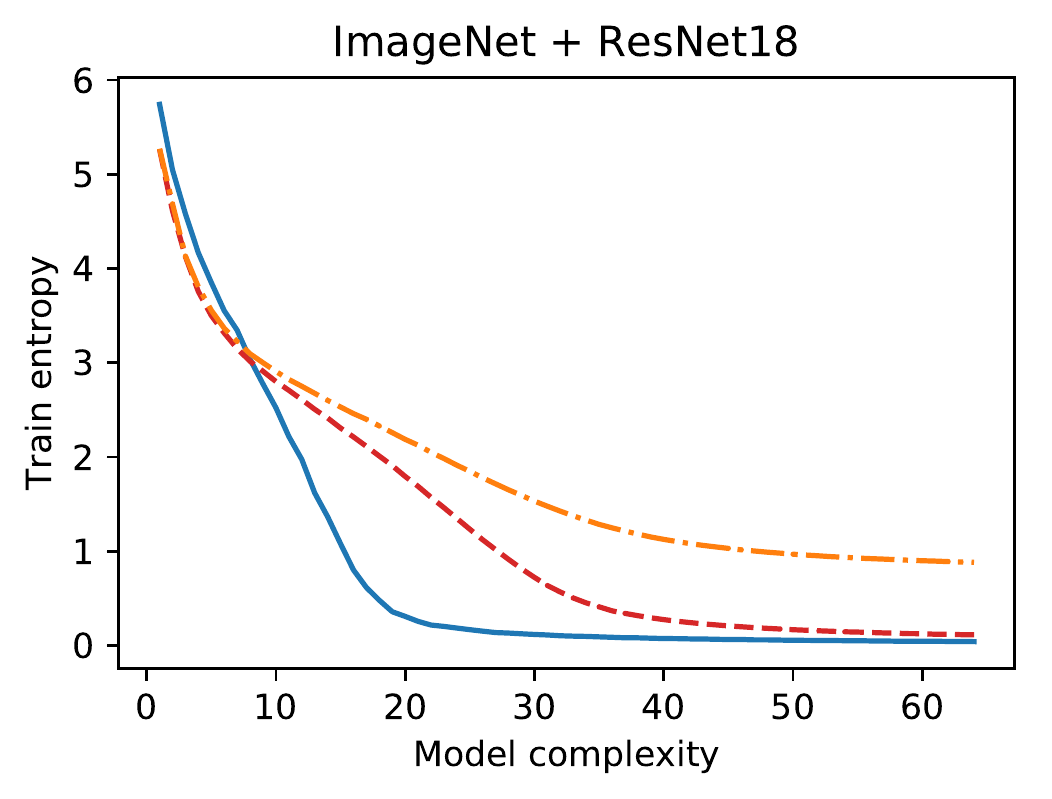} 
    & \includegraphics[width=0.45\textwidth]{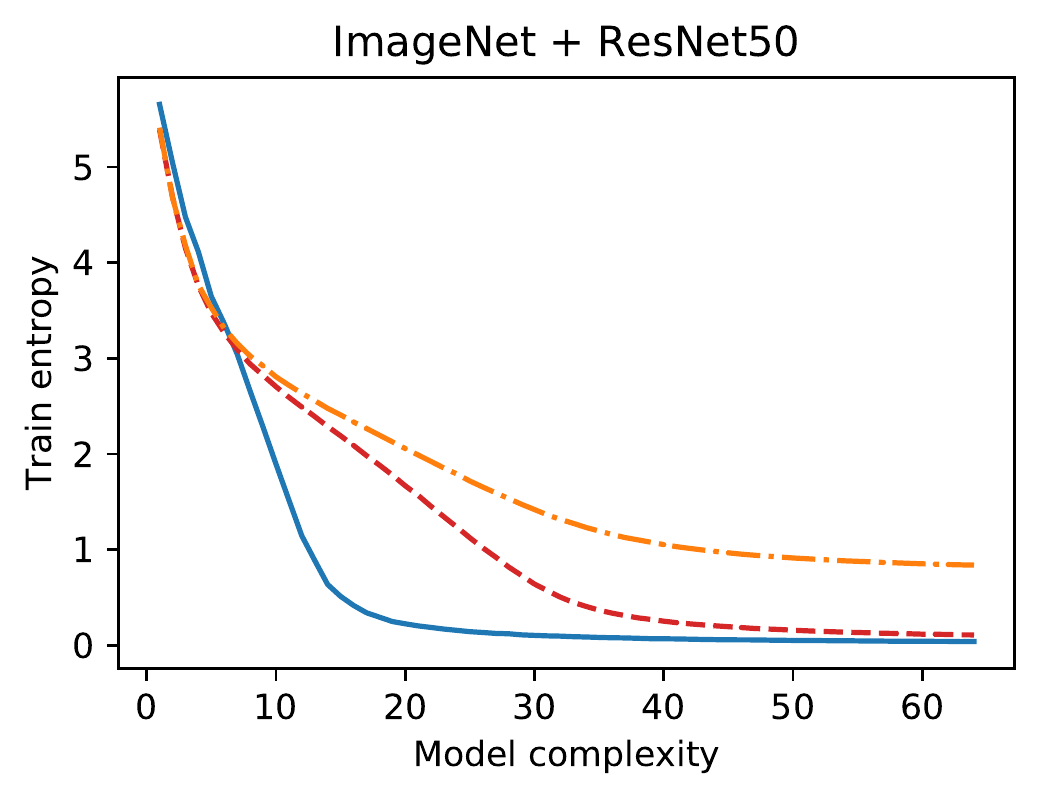}
\end{tabular}

\caption{
  Average entropy of the predictions of the teacher and two students on their respective training sets. Lower values indicate a ``harder'' classifier, and higher values a ``softer'' one. The two models trained on hard labels eventually memorize their training data.
  Clockwise from top-left: CIFAR-10 using ResNet18 models, SVHN using 5-layer CNN models, ImageNet using ResNet50 models, and ImageNet using ResNet18 models. Teacher-train/student-train/test splits can be found in \tabref{experiment-setup}.
}

\label{fig:train-entropy}

\end{figure*}
\figref{train-entropy} shows plots of the average entropy of the (soft) predictions made by the teacher and the two students on their respective training sets (for the students, these are the held-out datasets labeled by the teacher), in each of our four experiments. Both the teacher, and the student trained on hard labels, eventually memorize their training data, although it generally takes the student somewhat longer, presumably because it has a larger dataset. The student trained on soft labels cannot, of course, truly ``memorize'' them, since the labels themselves are soft.

\subsection{Artificial Label Noise on CIFAR-10}\label{app:experiments:noise}

\begin{figure*}[t]

\centering

\begin{tabular}{ccc}
    \includegraphics[width=0.3\textwidth]{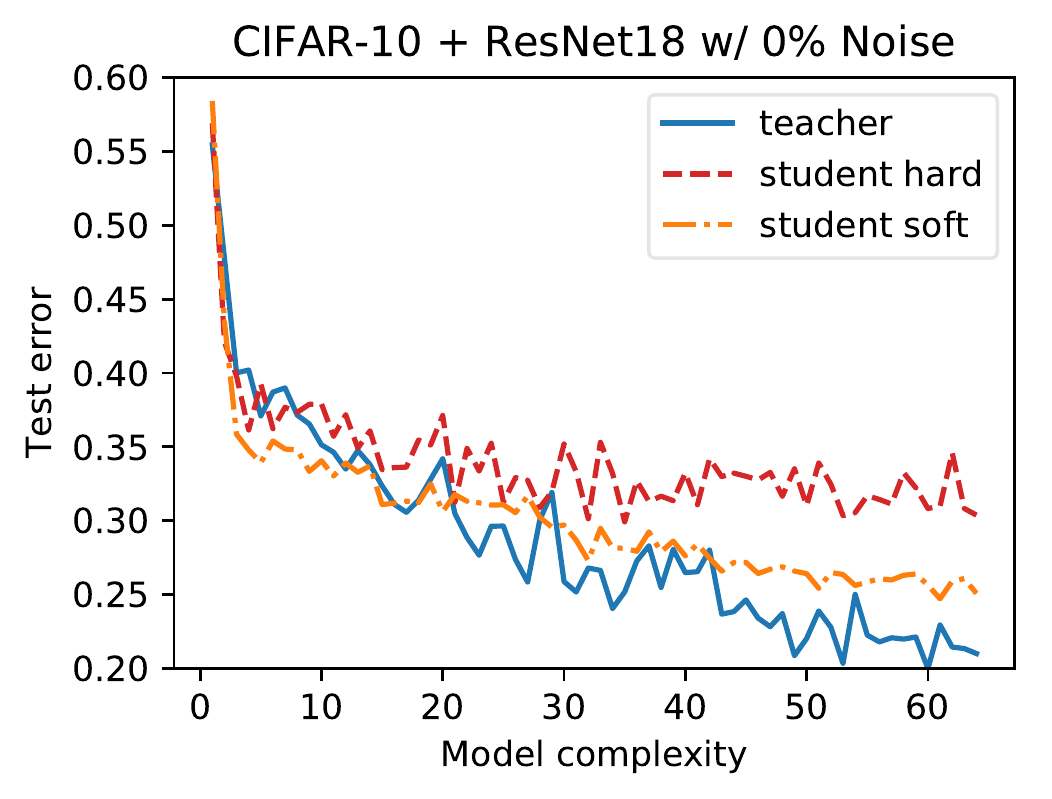} 
    & \includegraphics[width=0.3\textwidth]{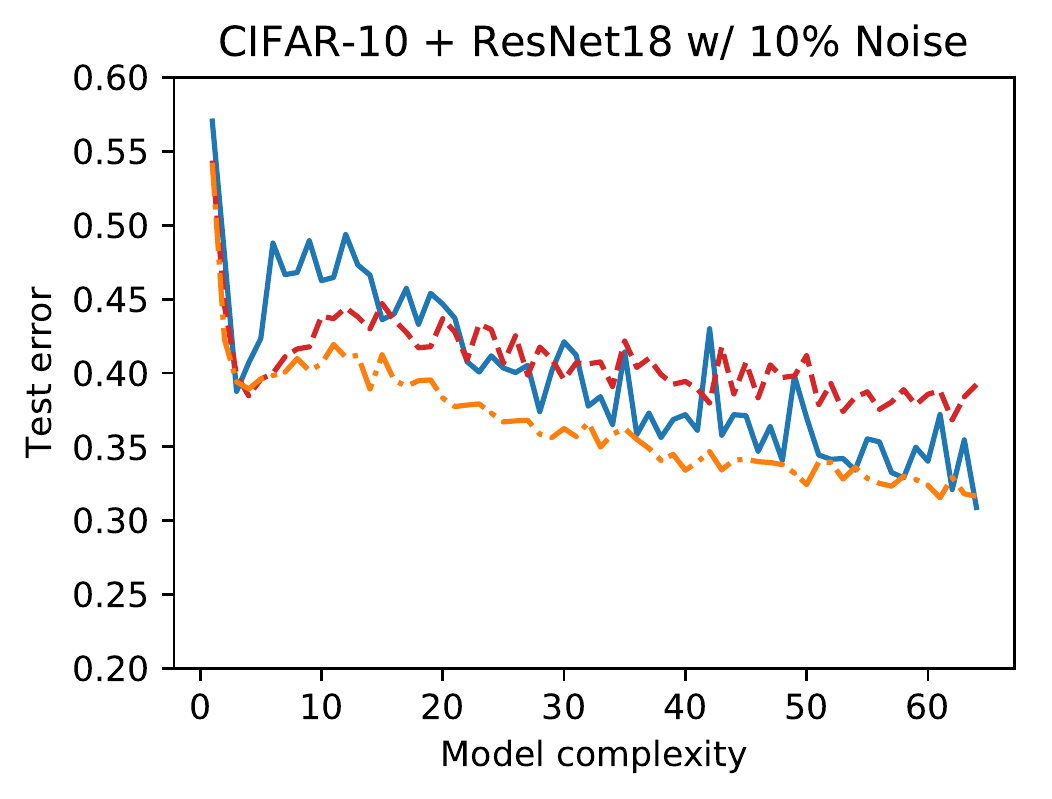} &
    \includegraphics[width=0.3\textwidth]{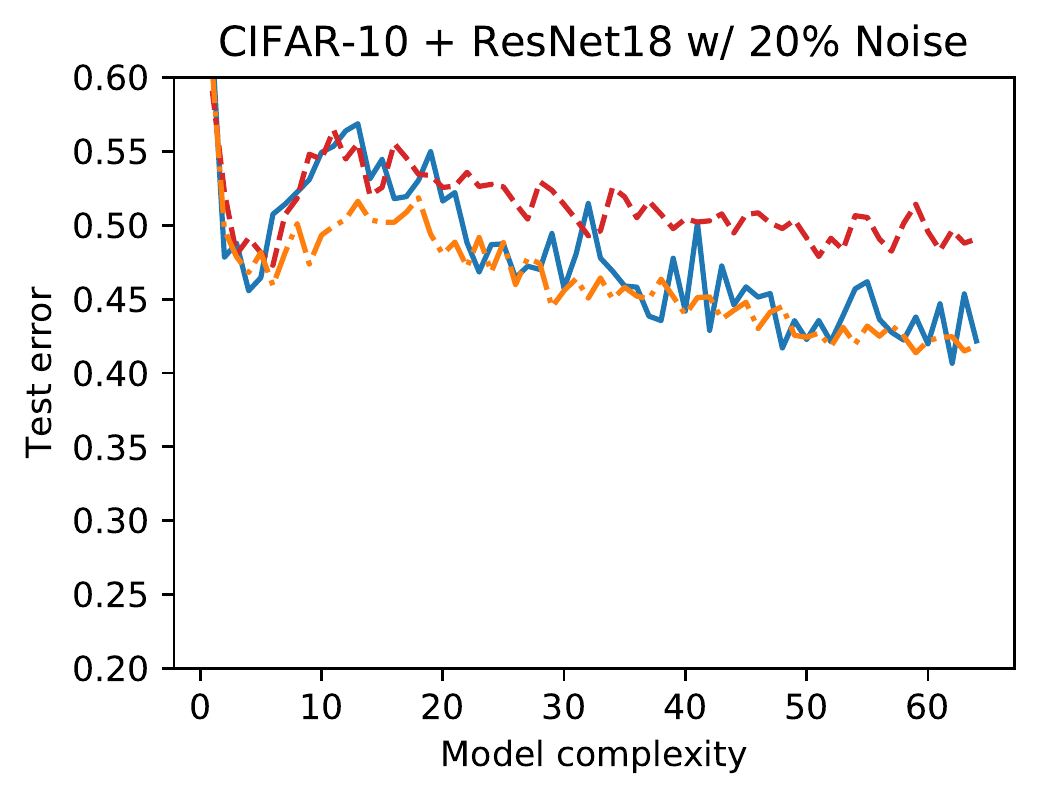} 
\end{tabular}

\caption{
  CIFAR-10 test error rates as a function of model complexity $m$ for a ``teacher'' model trained on the teacher training set, and for two student models trained on a dataset labeled by the teacher. One student was trained from ``soft'' teacher labels, and the other on ``hard'' labels derived from thresholding the teacher.
  From left to right: no noise, $10\%$ \iid label noise on the teacher's training set, and $20\%$ noise. The teacher-train/student-train/test split can be found in \tabref{experiment-setup}.
}

\label{fig:cifar10-noise}

\end{figure*}
On the CIFAR-10 experiment of \secref{experiments}, it appeared that the ``classical'' regime was too brief to enable the student trained on hard labels to outperform the teacher (although the student trained on soft labels did, albeit only for extremely low $m$s).

\figref{cifar10-noise} explores what happens on this dataset if we introduce artificial \iid label noise to the training set provided to the teacher (and \emph{only} this dataset). Naturally, adding noise increases the error rates, but it \emph{also} makes the problem more difficult, thereby pushing the ``hump'' in the double-descent curve to the right. We can see that, once this happens, our desired phenomenon becomes more pronounced: while it is still debatable whether the student trained on hard labels outperforms the teacher, the student trained on soft labels does so very clearly.

\subsection{Reproducibility}\label{app:experiments:reproducibility}

Since we applied no early stopping, and trained all models for an excessive amount of time in order to ensure that we entered the ``memorization'' regime, we expect to observe little (or none) of the systematic randomness caused by under-fitting. Having observed that averaging over multiple experimental runs required intensive resources, but did not uncover additional results, we chose to report single experiment runs in this paper. 

\figref{cifar10-stability} provides empirical evidence for this observation, on CIFAR-10. We repeated the student training from the $m=64$ teacher five times, and plotted the average errors and their ranges. We can see that the test error pattern is well-captured by the randomly-chosen single run.

\begin{figure*}[t]

\centering

\begin{tabular}{cc}
    \includegraphics[width=0.45\textwidth]{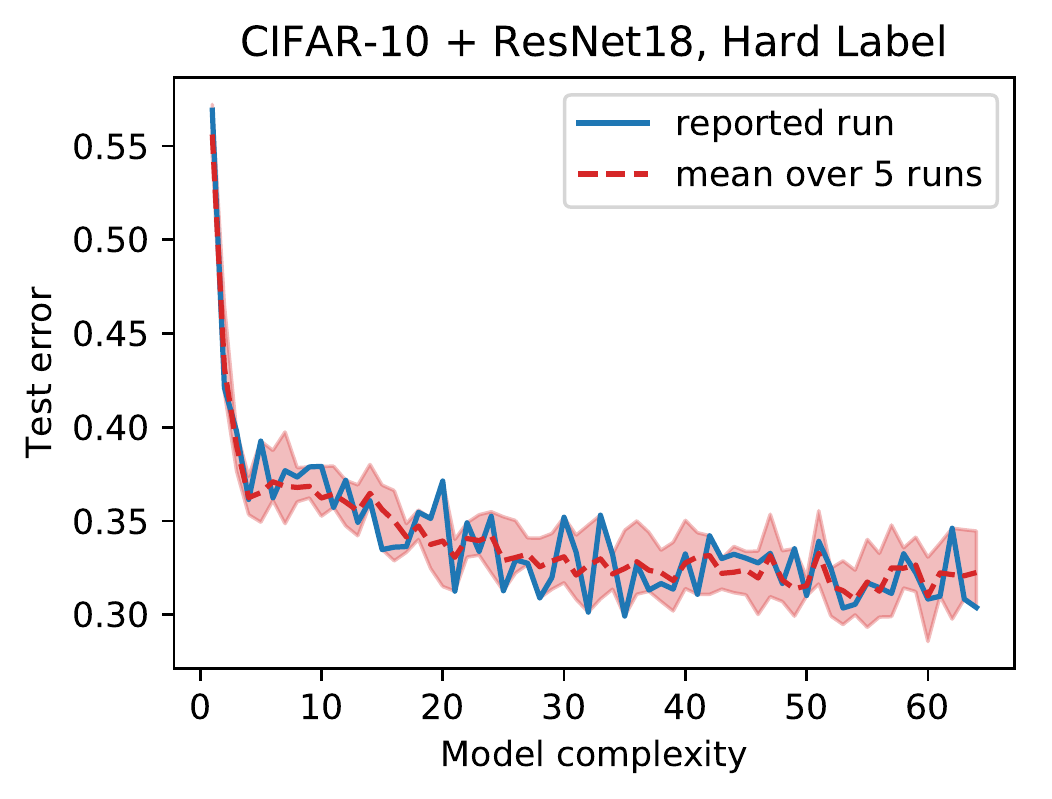} 
    & \includegraphics[width=0.45\textwidth]{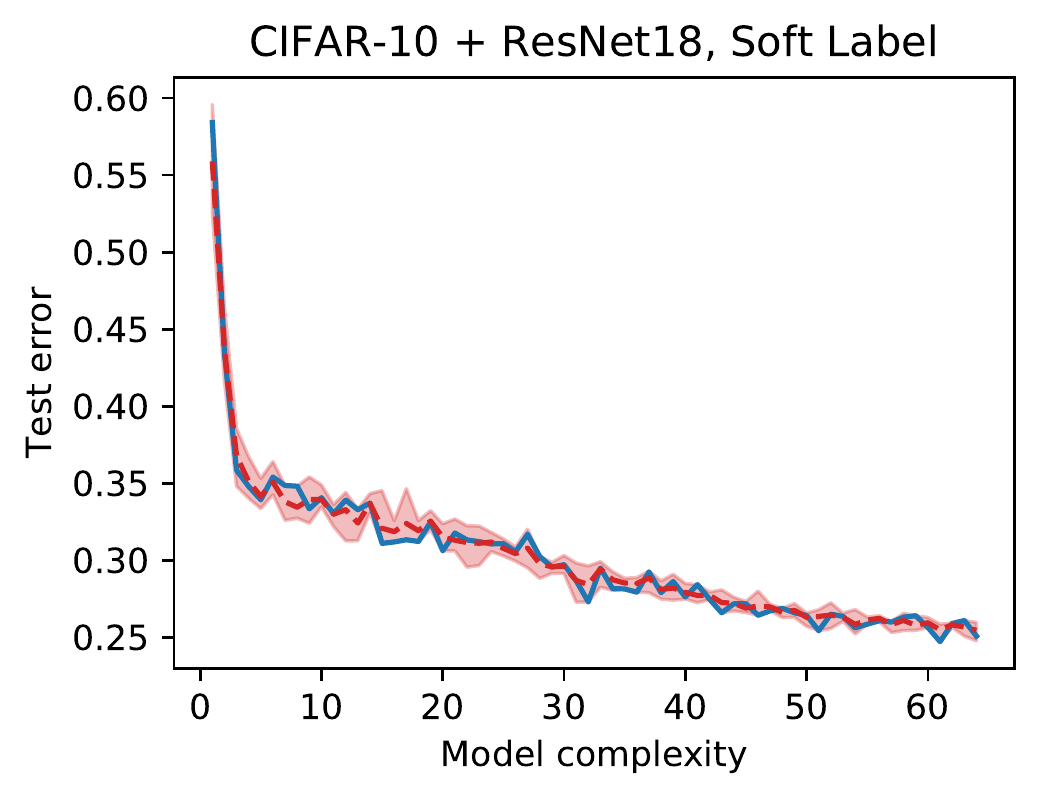} 
\end{tabular}
\caption{
  CIFAR-10 test error rates as a function of model complexity $m$. The blue curves correspond to the single runs we reported in the main body, e.g. in \figref{experiments}, while the red curves and bands represent the mean and the range of 5 repeated independent runs under the same settings. In both cases the randomly-chosen reported run captures the overall behavior of all involved curves. 
}

\label{fig:cifar10-stability}

\end{figure*}

\label{document:end}

\end{document}